\documentclass{article} 
\usepackage{iclr2026_conference,times}

\usepackage{amsmath,amsfonts,bm}

\def\eqref#1{equation~\ref{#1}}

\def\1{\bm{1}}

\DeclareMathAlphabet{\mathsfit}{\encodingdefault}{\sfdefault}{m}{sl}
\SetMathAlphabet{\mathsfit}{bold}{\encodingdefault}{\sfdefault}{bx}{n}

\usepackage{multirow} 
\usepackage{xcolor}
\usepackage{pifont}
\usepackage{amssymb}
\usepackage[table]{xcolor}

\usepackage{xcolor}
\usepackage{hyperref}
\usepackage{soul}
\usepackage{url}
\usepackage[dvipsnames]{xcolor}
\usepackage{graphicx}
\usepackage{arydshln}
\usepackage{booktabs} 
\usepackage{wrapfig}
\newtheorem{theorem}{Theorem}

\newcommand{\UniTrackMethod}{
\begin{figure}[!htbp]
  \centering
  \includegraphics[width=\linewidth]{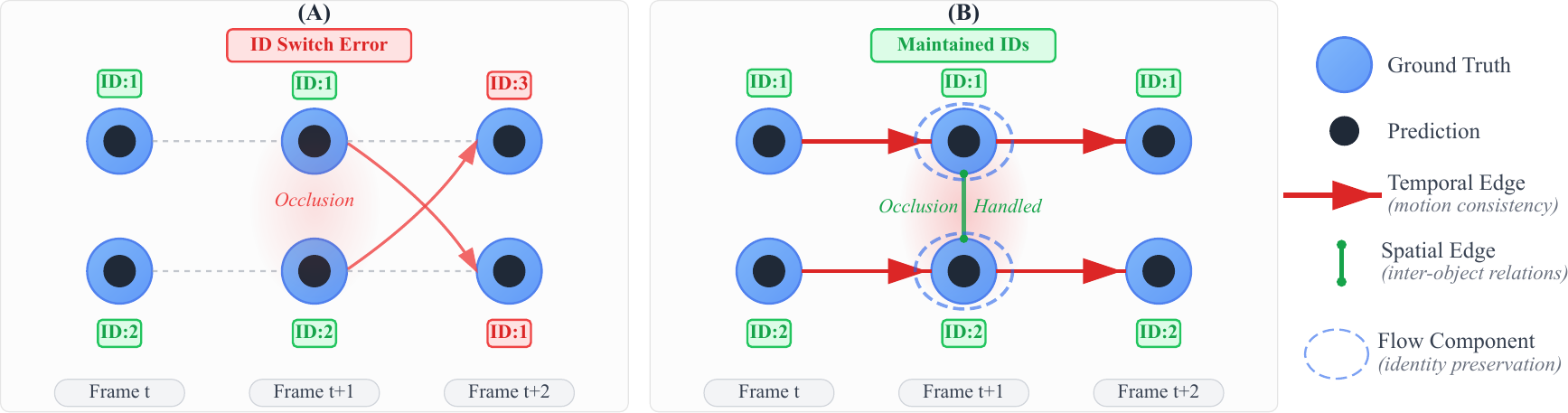}
  \caption{Comparison of UniTrack's graph-based approach and classical multi-object tracking. (A) A detection-based tracking handles trajectories independently, resulting in  ID switches at occlusion: person 1 reassigned to \textcolor{red}{ID 3} and person 2 to \textcolor{red}{ID 1} (crossing arrows and \textcolor{red}{red-highlighted boxes}). (B) Our graph-based approach maintains correct identities through the same occlusion via three integrated components: \textcolor{red}{temporal edges} (\textcolor{red}{red arrows}) for motion consistency, \textcolor{Green}{spatial edges} (\textcolor{Green}{green lines}) for inter-object relationships, and \textcolor{blue}{flow components} (\textcolor{blue}{blue dashed ellipses}) for identity preservation. \textcolor{Green}{Green-highlighted ID boxes} show successful identity maintenance throughout the sequence. Ground truth (\textcolor{blue}{blue circles}) and predictions (dark centers) demonstrate how unified optimization prevents ID switches in challenging scenarios.}
  \label{fig:unitrack_method}
\end{figure}
}
\newcommand{\Trackingchallenge}{
\begin{wrapfigure}{r}{0.5\columnwidth}
  \centering
  \includegraphics[width=0.99\linewidth]{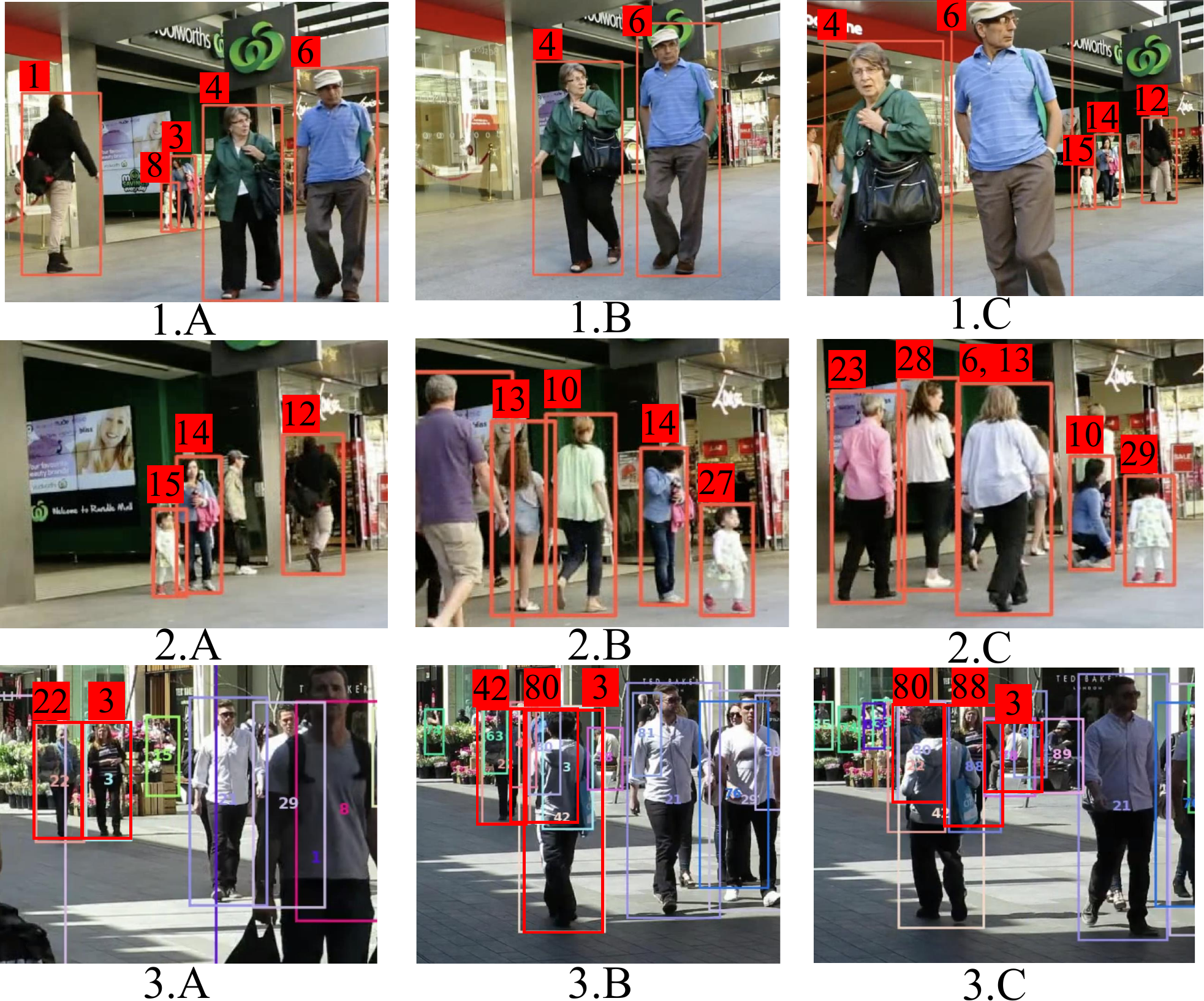}
    \caption{Illustration of tracking errors in MOTR \cite{zeng2021motr} on MOT17 sequences 8 and 9. The first row (sequence 9) highlights post-occlusion ID switches (error Type 1): subject 1 loses tracking when occluded behind subject 4 in frame 1.B, with IDs 8, 3, and 1 subsequently reassigned as IDs 15, 14, and 12 in frame 1.C. The second row (sequence 8) demonstrates temporal inconsistency (error Type 2), where the tracker fails to maintain IDs when subjects change postures: ID 15 changes to 27 (2.B), and in 2.C, MOTR erroneously assigns two bounding boxes to subject 6, 13 while ID 14 changes to 10 and ID 15 is reassigned to 29, illustrating instability in temporal association. The third row (sequence 9) demonstrates cross-subject ID switches (error Type 3): ID 22 and 3 are correctly assigned in 3.A, but when subject 42 occludes subject 3 in 3.B, it triggers a cascade of errors--ID 22 gets incorrectly swapped with ID 80, followed by ID 3 being erroneously reassigned as ID 88 in 3.C, showcasing how occlusions propagate tracking failures.}
  \label{fig:Trackingchallenge}
  \vspace{-0.5cm}
\end{wrapfigure}
}

\newcommand{\visualresults}{
\begin{figure*}[t]
    \centering
    \includegraphics[width=1.0\linewidth]{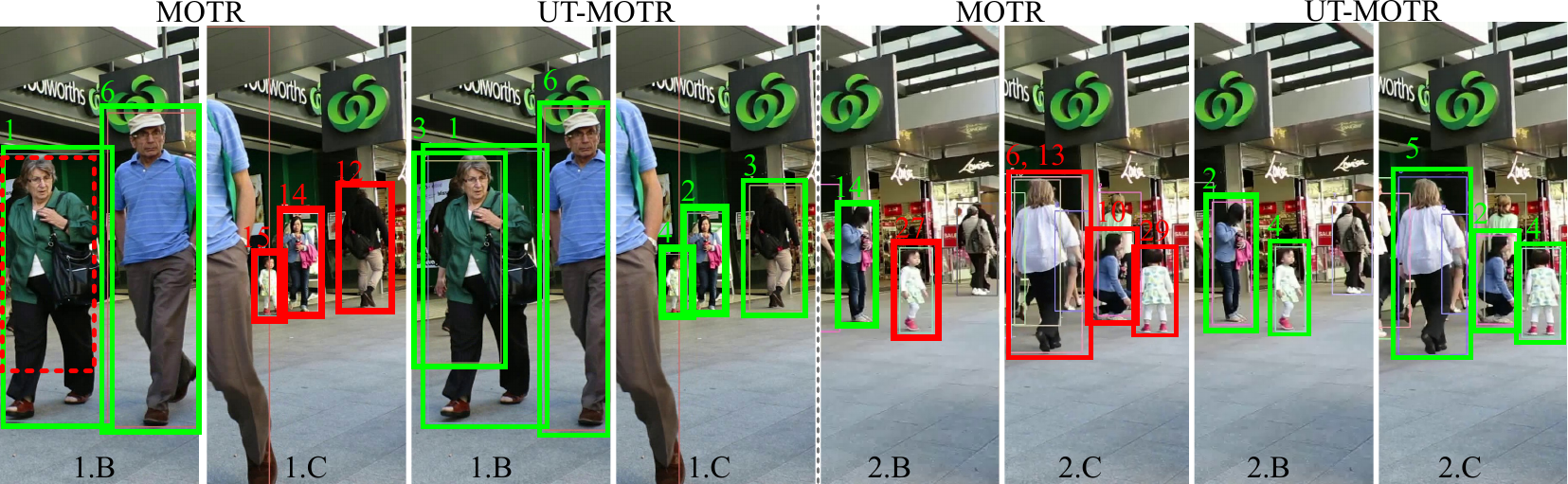}
    \caption{Comparative performance of MOTR \cite{zeng2021motr} and UT-MOTR revisiting the same challenging scenarios from Figure~\ref{fig:Trackingchallenge}. Red bounding boxes indicate tracking errors, green boxes show successful tracking, and dotted red boxes highlight missed detections. UT-MOTR successfully addresses the three error types demonstrated in Figure~\ref{fig:Trackingchallenge}: maintaining consistent IDs through occlusions (frames 1.B-1.C), preserving temporal consistency during posture changes (frames 2.B-2.C), and preventing cross-subject ID switches in crowded scenes. The unified graph-theoretic approach enables robust tracking where the baseline MOTR fails. Additional qualitative results with Trackformer are provided in Section~\ref{sec:trackformerqual}.}
    \label{fig:tracking_comparison}
    \vspace{-0.5cm}
\end{figure*}
}

\newcommand{\fpsablation}{
\begin{wrapfigure}{r}{0.5\columnwidth}
\centering
\includegraphics[width=\linewidth]{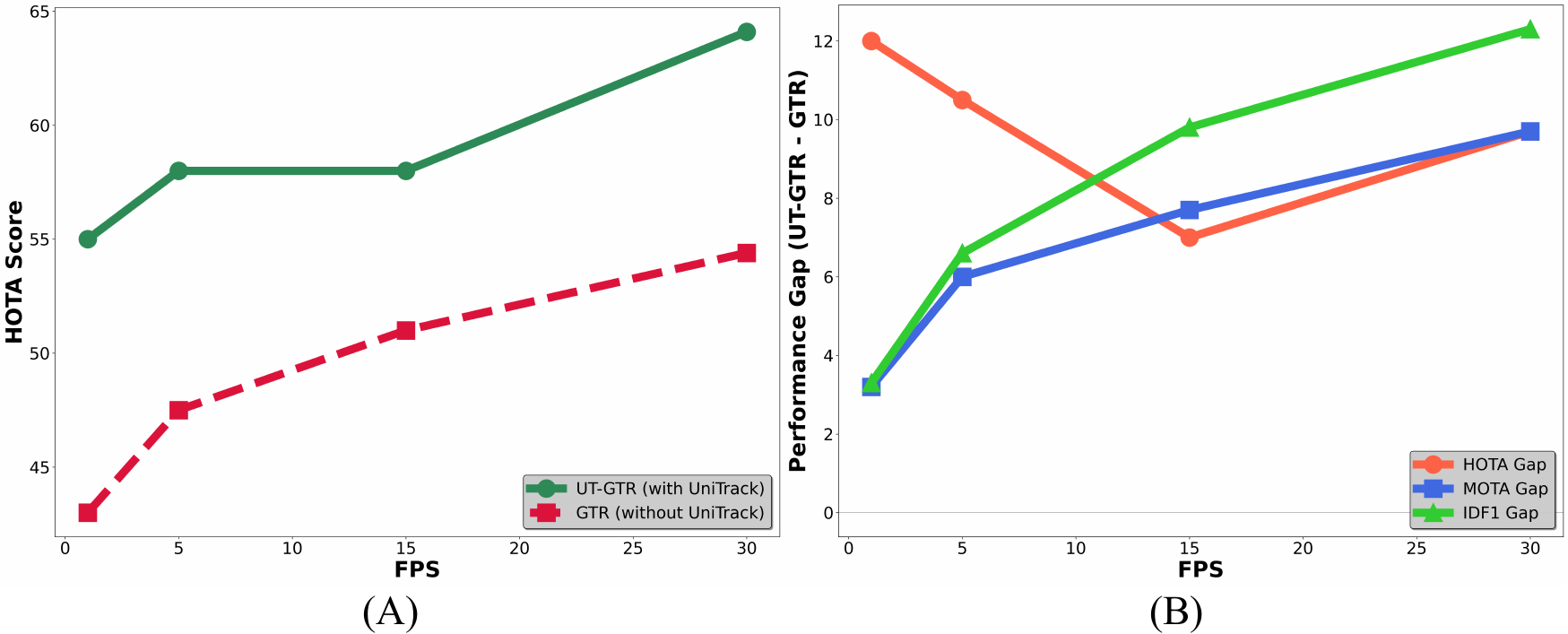}
\caption{Frame-rate resilience analysis of UniTrack. (A) HOTA scores show UT-GTR maintains superior performance, with both methods plateauing around 5-15 FPS. (B) Performance improvements of UT-GTR over GTR: HOTA gap decreases from 12\% to 7\% as frame rate increases, while MOTA and IDF1 gaps increase sharply from 1-5 FPS before converging. UniTrack maintains consistent advantages across all frame rates.}
\label{fig:fps_analysis}
\vspace{-1.5em}
\end{wrapfigure}
}

\newcommand{\lossbasin}{
\begin{wrapfigure}{r}{0.5\columnwidth}
\centering
\includegraphics[width=\linewidth]{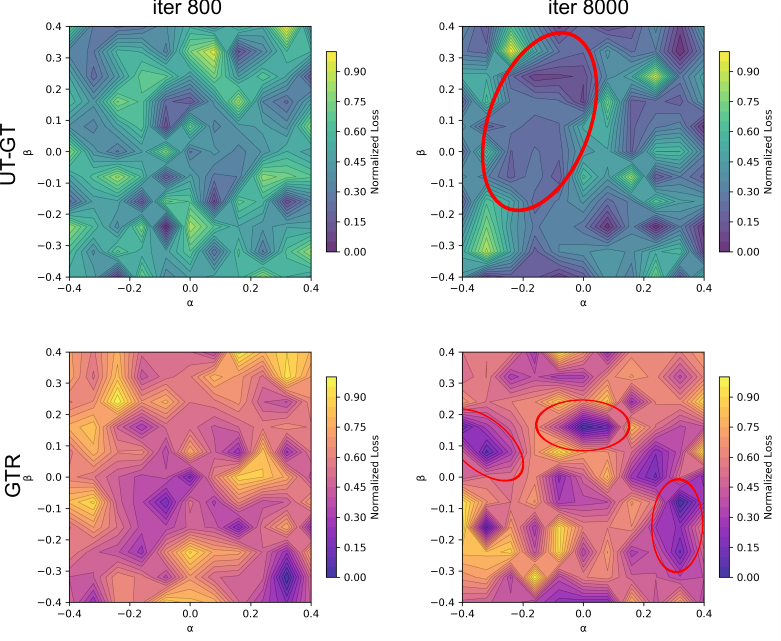}
\caption{Loss surface evolution during training with and without UniTrack loss over the MOT17 dataset. Contour plots show normalized loss landscapes at iteration 800 (left) and 8000 (right) for models trained with UniTrack loss (top row, viridis colormap) and baseline training (bottom row, plasma colormap). The UniTrack loss creates broader, more stable convergence basins with smoother gradients, while the baseline approach results in narrower, more fragmented loss landscapes. Parameters $\alpha$ and $\beta$ represent perturbations around trained model weights. All surfaces are normalized to [0,1] for visual comparison.}
\label{fig:lossbasin}
\vspace{-1em}
\end{wrapfigure}
}

\newcommand{\extravisualresults}{
\begin{figure*}[t]
    \centering
    \includegraphics[width=1.0\linewidth]{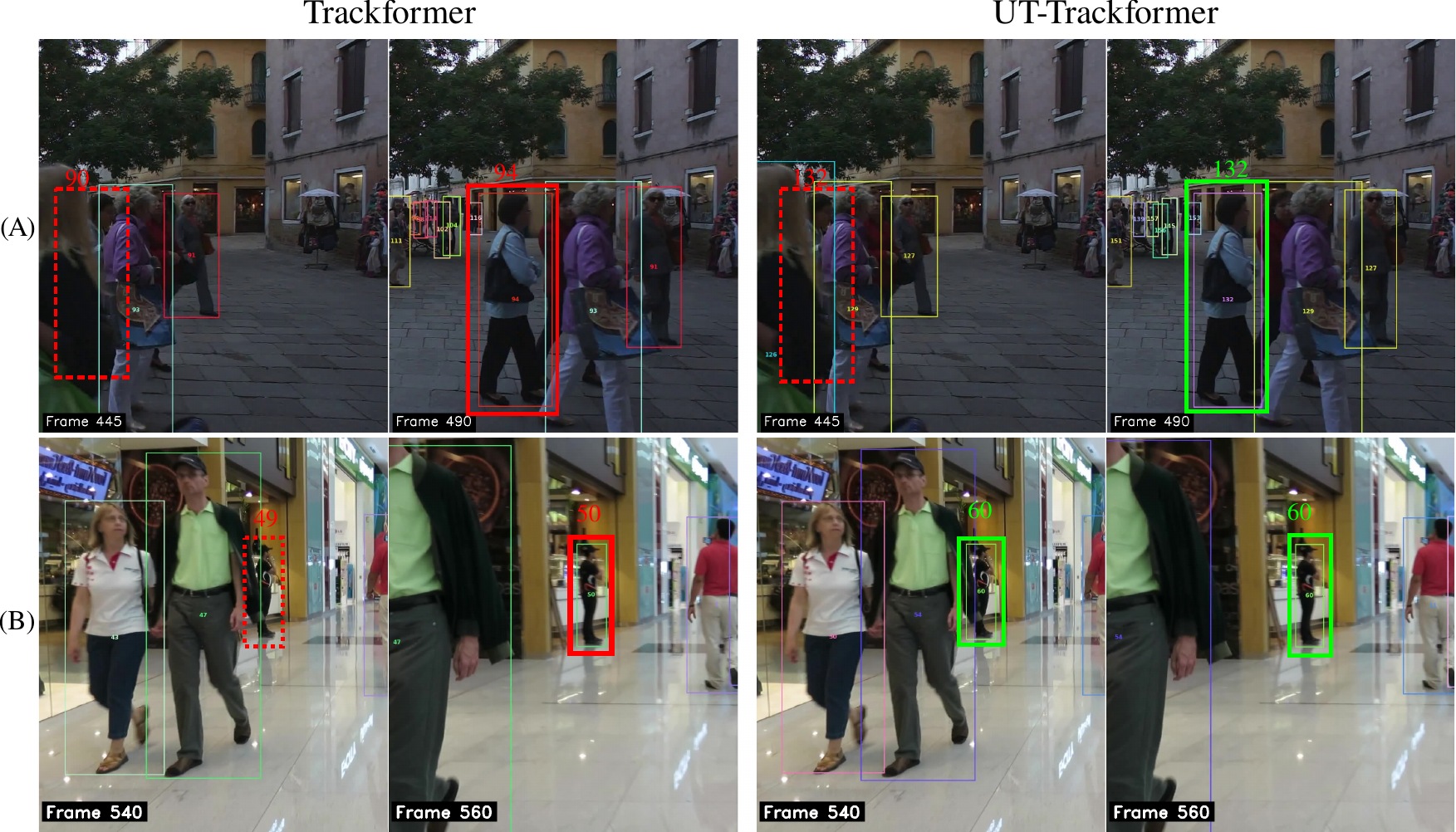}
     \caption{Additional qualitative comparison of Trackformer and UT-Trackformer on MOT17 challenging sequences. Red bounding boxes indicate tracking failures, green boxes show successful tracking, and dotted boxes highlight missed detections. (A) Post-occlusion ID switches: Trackformer incorrectly reassigns subject ID from 90 to 94 between frames 445-490, while UT-Trackformer maintains consistent ID 132. (B) Temporal inconsistency: Trackformer loses subject ID 49 in frame 540 (dotted red box) and assigns new ID 50 in frame 560, while UT-Trackformer successfully maintains ID 60 throughout the sequence. These results demonstrate UniTrack's consistent improvements in identity preservation across different transformer-based architectures.}
    \label{fig:app_tracking_comparison}
\end{figure*}
}

\newcommand{\extraextravisualresults}{
\begin{figure*}[t]
    \centering
    \includegraphics[width=1.0\linewidth]{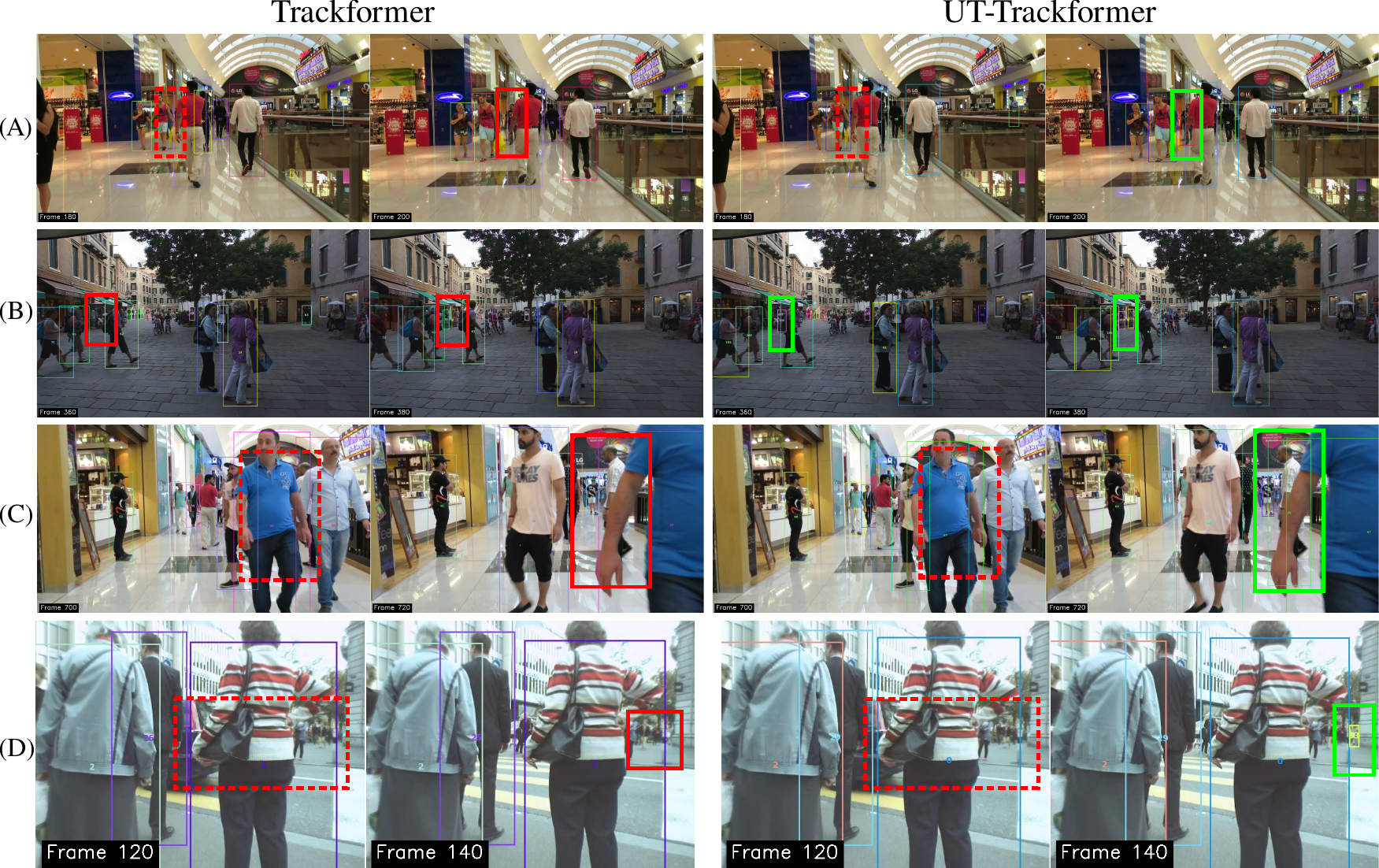}
     \caption{Additional qualitative comparison of Trackformer and UT-Trackformer on MOT17 challenging sequences. Red bounding boxes indicate tracking failures, green boxes show successful tracking, and dotted boxes highlight missed detections.}
    \label{fig:app_tracking_extra comparison}
\end{figure*}
}
\newcommand{\MOTSeventeenTable}{
\begin{table*}[!htbp]
    \centering
    \small
        \caption{Performance evaluation on MOT17 \cite{milan2016mot17} and MOT20 \cite{dendorfer2020mot20} test sets. Results compare standard tracking methods with and without UniTrack (UT) loss, ordered by publication year. Models trained with UT show significant improvements in detection and identity preservation, especially in MOTA and IDF1 scores. UniTrack significantly reduces false positives (FP) and false negatives (FN) while maintaining tracking stability. Performance is consistent between MOT17 and MOT20 results, demonstrating UniTrack's robustness. Bold values highlight performance gains achieved by integrating UniTrack, while shaded values denote the top-performing tracker among all contestants. We use this system for all tables unless stated otherwise.}
    \label{tab:mot17}
    \setlength{\tabcolsep}{5pt}
    \scalebox{0.72}{
    \begin{tabular}{l|cccc|cc|cccc|cc}
        \toprule
        &\multicolumn{6}{|c|}{MOT17} &\multicolumn{6}{|c}{MOT20}\\ \midrule
        Method & MOTA$\uparrow$ & IDF1$\uparrow$ & HOTA$\uparrow$ & FP$\downarrow$ & FN$\downarrow$ & IDs$\downarrow$ & MOTA$\uparrow$ & IDF1$\uparrow$ & HOTA$\uparrow$ & FP$\downarrow$ & FN$\downarrow$ & IDs$\downarrow$ \\
        \midrule
        FairMOT \cite{zhang2021fairmot} & 61.7 & 61.5 & 52.9 & 1902 & 20456 & \textbf{388} & 53.5 & 58.3 & 52.4 & 1902 & 25456 & 488\\
        \textbf{UT}-FairMOT & \textbf{64.5} & \textbf{64.2} & \textbf{55.3} & \textbf{1623} & \textbf{19234} & 482 & \textbf{55.2} & \textbf{61.5} & \textbf{55.8} & \textbf{1723} & \textbf{23234} & \textbf{402}\\
        \midrule
        MOTR \cite{zeng2021motr}& 62.1 & 61.3 & 53.2 & 1843 & 21034 & \cellcolor{gray!25}\textbf{289} & 53.2 & 57.9 & 51.8 & 1843 & 25034 & 389 \\
        \textbf{UT}-MOTR & \textbf{64.8} & \textbf{63.9} & \textbf{55.7} & \textbf{1562} & \textbf{19845} & 356 & \textbf{55.8} & \textbf{60.4} & \textbf{54.2} & \textbf{1562} & \textbf{23845} & \textbf{356}\\
        \midrule
        Trackformer \cite{meinhardt2021trackformer} & 62.3 & 57.6 & 52.8 & 1965 & 21893 & \textbf{643} & 54.1 & 56.2 & 50.9 & 1965 & 25893 & 643 \\
        \textbf{UT}-Trackformer & \textbf{65.9} & \textbf{66.4} & \textbf{56.2} & \textbf{1039} & \textbf{16667} & 705 & \textbf{56.2} & \textbf{64.1} & \textbf{57.7} & \textbf{1374} & \textbf{22004} & \cellcolor{gray!25}\textbf{314}\\
        \midrule
        ByteTrack \cite{zhang2022bytetrack} & 80.3 & 77.3 & 63.1 & 25491 & 83721 & 2196 & 77.8 & 75.2 & 61.3 & 26249 & 87594 & 1223 \\
        \textbf{UT}-ByteTrack & \textbf{82.1} & \textbf{79.8} & \textbf{65.4} & \textbf{22145} & \textbf{79350} & \textbf{1865} & \textbf{79.5} & \textbf{77.8} & \textbf{63.7} & \textbf{23520} & \textbf{83145} & \textbf{1045}\\
        \midrule
        GTR \cite{zhou2022global} & 75.3 & 71.5 & 59.1 & 1250 & 15800 & 1445 & 63.6 & 52.3 & 42.6 & 9916 & 205166 & 8604 \\
        \textbf{UT}-GTR & \textbf{79.1} & \textbf{74.8} & \textbf{67.9} & \textbf{980} & \textbf{14200} & \textbf{951} & \textbf{63.8} & \textbf{52.5} & \textbf{43.0} & \textbf{9885} & \textbf{204530} & \textbf{8570}\\
        \midrule
        MOTE \cite{galoaa2025mote} & 82.0 & 80.3 & 66.3 & 1100 & 8500 & 620 & 81.7 & 79.8 & 65.8 & 7800 & 12000 & 685 \\
        \textbf{UT}-MOTE & \cellcolor{gray!25}\textbf{84.5} & \cellcolor{gray!25}\textbf{83.5} & \cellcolor{gray!25}\textbf{68.2} & \cellcolor{gray!25}\textbf{850} & \cellcolor{gray!25}\textbf{7200} & \textbf{542} & \cellcolor{gray!25}\textbf{83.2} & \cellcolor{gray!25}\textbf{81.4} & \cellcolor{gray!25}\textbf{67.1} & \cellcolor{gray!25}\textbf{7200} & \cellcolor{gray!25}\textbf{10500} & \textbf{578} \\
        \bottomrule
    \end{tabular}
    }
\end{table*}
}
\newcommand{\SportsMOTDanceTrackTable}{
\begin{table*}[!htbp]
    \centering
    \small
    \caption{Performance evaluation on SportsMOT~\cite{cui2023sportsmot} and DanceTrack~\cite{peize2021dance} test sets. UniTrack demonstrates significant improvements on both datasets, with notable gains in identity preservation (IDF1) and ID switch reduction. SportsMOT presents challenges with rapid movements and complex interactions in sports scenarios, while DanceTrack features frequent occlusions, similar appearances, and complex dance movements. The consistent improvements across both datasets highlight UniTrack's effectiveness in handling diverse tracking scenarios. Bold value indicates performance gains from UniTrack integration; shading marks the best overall tracker.
}  
    \label{tab:sportsmot_dancetrack}
    \setlength{\tabcolsep}{4pt}
    \scalebox{0.75}{
    \begin{tabular}{l|cccc|cc|cccc|cc}
        \toprule
        &\multicolumn{6}{|c|}{SportsMOT} &\multicolumn{6}{|c}{DanceTrack}\\ \midrule
        Method & MOTA$\uparrow$ & IDF1$\uparrow$ & HOTA$\uparrow$ & FP$\downarrow$ & FN$\downarrow$ & IDs$\downarrow$ & MOTA$\uparrow$ & IDF1$\uparrow$ & HOTA$\uparrow$ & FP$\downarrow$ & FN$\downarrow$ & IDs$\downarrow$ \\
        \midrule
        FairMOT \cite{zhang2021fairmot} & 90.8 & 53.5 & 49.3 & 8765 & 7234 & 2845 & 82.2 & 40.8 & 39.7 & 11234 & 16890 & 2987 \\
        \textbf{UT}-FairMOT & \textbf{92.5} & \textbf{56.2} & \textbf{52.1} & \textbf{7890} & \textbf{6456} & \textbf{2234} & \textbf{84.8} & \textbf{43.5} & \textbf{42.3} & \textbf{10456} & \textbf{15234} & \textbf{2456} \\
        \midrule
        Trackformer \cite{meinhardt2021trackformer} & 88.1 & 50.0 & 60.0 & 13983 & 16778 & 4250 & 48.2 & 12.8 & 19.4 & 48500 & 30200 & 37800 \\
        \textbf{UT}-Trackformer & \textbf{90.3} & \textbf{51.5} & \textbf{60.8} & \textbf{12252} & \textbf{15769} & \textbf{3264} & \textbf{50.4} & \textbf{13.6} & \textbf{20.5} & \textbf{46825} & \textbf{28967} & \textbf{35876} \\
        \midrule
        MOTR \cite{zeng2021motr} & 76.2 & 58.4 & 55.8 & 11543 & 12890 & 2890 & 79.7 & 51.5 & 54.2 & 15234 & 28967 & 4567 \\
\textbf{UT}-MOTR & \textbf{79.5} & \textbf{62.1} & \textbf{58.4} & \textbf{10234} & \textbf{11267} & \textbf{2156} & \textbf{82.1} & \textbf{54.8} & \textbf{57.3} & \textbf{13456} & \textbf{26234} & \textbf{3892} \\
\midrule
        ByteTrack \cite{zhang2022bytetrack} & 94.1 & 69.8 & 62.8 & 8934 & 5827 & 3267 & 88.2 & 51.9 & 47.1 & 12456 & 18934 & 3456 \\
        \textbf{UT}-ByteTrack & \cellcolor{gray!25}\textbf{96.2} & \textbf{71.1} & \textbf{64.3} & \cellcolor{gray!25}\textbf{7545} & \cellcolor{gray!25}\textbf{4412} & \textbf{2234} & \cellcolor{gray!25}\textbf{91.3} & \cellcolor{gray!25}\textbf{56.5} & \textbf{49.1} & \textbf{10234} & \textbf{16782} & \cellcolor{gray!25}\textbf{2134} \\
        \midrule
        GTR \cite{zhou2022global} & 74.8 & 61.3 & 54.4 & 12176 & 11578 & 2364 & 80.6 & 45.9 & 43.7 & 9683 & 25191 & 4338 \\
        \textbf{UT}-GTR & \textbf{84.5} & \cellcolor{gray!25}\textbf{73.6} & \cellcolor{gray!25}\textbf{66.1} & \textbf{8212} & \textbf{6628} & \cellcolor{gray!25}\textbf{1092} & \textbf{82.6} & \textbf{48.5} & \cellcolor{gray!25}\textbf{50.2} & \cellcolor{gray!25}\textbf{8521} & \cellcolor{gray!25}\textbf{15234} & \textbf{3456} \\
        \midrule
        MOTE \cite{galoaa2025mote} & 93.8 & 68.2 & 61.5 & 7892 & 5234 & 2987 & 87.4 & 53.2 & 46.8 & 10892 & 17234 & 3124 \\
        \textbf{UT}-MOTE & \textbf{95.1} & \textbf{70.5} & \textbf{63.2} & \textbf{7123} & \textbf{4789} & \textbf{2456} & \textbf{89.8} & \textbf{56.1} & \textbf{48.9} & \textbf{9567} & \textbf{15892} & \textbf{2567} \\
        \bottomrule
    \end{tabular}
    }
\end{table*}
}
\newcommand{\SequenceResults}{%
\begin{wraptable}{r}{0.5\columnwidth}
\centering
\caption{Performance comparison on challenging sequences that highlight the three error types. UT variants generally outperform their respective baselines across most metrics, with improvements varying by tracking architecture.}
\label{tab:error_types}
\normalsize
\resizebox{0.52\columnwidth}{!}{%
\begin{tabular}{lcccccc}
\toprule
Sequence & Method & MOTA↑ & IDF1↑ & HOTA↑ & FN↓ & IDs↓ \\
\midrule
\multirow{5}{*}{MOT17-13} & Trackformer & 57.42 & 63.16 & 66.29 & 1742 & 224 \\
& \textbf{UT}-Trackformer & \textbf{62.86} & \textbf{67.38} & \textbf{68.45} & \textbf{1425} & \textbf{186} \\
\addlinespace[2pt]
\cmidrule{2-7}
& MOTR & 58.63 & 62.47 & 65.18 & 1698 & \textbf{164} \\
& \textbf{UT}-MOTR & \textbf{61.75} & \textbf{65.26} & \textbf{67.21} & \textbf{1536} & 172 \\
\addlinespace[2pt]
\cmidrule{2-7}
& FairMOT & 56.98 & 61.84 & 64.76 & 1822 & 208 \\
\textit{Post-occlusion} & \textbf{UT}-FairMOT & \textbf{60.43} & \textbf{64.58} & \textbf{66.92} & \textbf{1678} & \textbf{185} \\
\midrule
\multirow{5}{*}{MOT17-10} & Trackformer & 65.86 & 62.84 & 59.72 & 2356 & 113 \\
& \textbf{UT}-Trackformer & \textbf{68.24} & \textbf{65.93} & \textbf{63.18} & \textbf{2087} & \textbf{104} \\
\addlinespace[2pt]
\cmidrule{2-7}
& MOTR & 64.59 & 63.12 & 60.04 & 2427 & \textbf{84} \\
& \textbf{UT}-MOTR & \textbf{67.31} & \textbf{66.25} & \textbf{62.38} & \textbf{2156} & 102 \\
\addlinespace[2pt]
\cmidrule{2-7}
& FairMOT & 65.02 & 62.57 & 58.93 & 2392 & 108 \\
\textit{Temporal inconsis.} & \textbf{UT}-FairMOT & \textbf{67.95} & \textbf{65.21} & \textbf{61.64} & \textbf{2134} & \textbf{98} \\
\midrule
\multirow{5}{*}{MOT17-02} & Trackformer & 51.23 & 37.51 & 42.63 & 5895 & 156 \\
& \textbf{UT}-Trackformer & \textbf{54.76} & \textbf{44.28} & \textbf{48.12} & \textbf{5103} & \textbf{143} \\
\addlinespace[2pt]
\cmidrule{2-7}
& MOTR & 50.46 & 39.25 & 43.48 & 5967 & 143 \\
& \textbf{UT}-MOTR & \textbf{53.87} & \textbf{45.36} & \textbf{47.95} & \textbf{5287} & \textbf{132} \\
\addlinespace[2pt]
\cmidrule{2-7}
& FairMOT & 49.85 & 38.76 & 42.15 & 6054 & 162 \\
\textit{Cross-subject} & \textbf{UT}-FairMOT & \textbf{52.94} & \textbf{43.89} & \textbf{46.83} & \textbf{5465} & \textbf{149} \\
\bottomrule
\end{tabular}%
}
\vspace{-1cm}
\end{wraptable}
}
\newcommand{\AblationTable}{
\begin{wraptable}{r}{0.5\columnwidth}
    \centering
    \normalsize
    \caption{Ablation study of UniTrack on MOT17 after 15 epochs. \textbf{Top:} Component ablation on Trackformer. \textbf{Bottom:} Weighting strategy comparison on GTR demonstrates Laplacian-based weighting outperforms alternatives. Underlined values indicate second-best performance.}
    \label{tab:ablation}
    \setlength{\tabcolsep}{2pt}
    \scalebox{0.72}{
    \begin{tabular}{l|cccc}
        \toprule
        Configuration & MOTA$\uparrow$ & IDF1$\uparrow$ & HOTA$\uparrow$ & IDs$\downarrow$ \\
        \midrule
        \multicolumn{5}{c}{\textit{Component Ablation (Trackformer)}} \\
        \midrule
        $\mathcal{L}_{\text{flow}}$ + $\mathcal{L}_{\text{spat}}$ + $\mathcal{L}_{\text{temp}}$ & \underline{56.2} & \textbf{64.1} & \textbf{57.7} & \underline{288} \\
        \hdashline
        w/o $\mathcal{L}_{\text{flow}}$ & 52.9 & 61.3 & 55.3 & 314 \\
        w/o $\mathcal{L}_{\text{spat}}$ & 54.3 & \underline{62.9} & \underline{56.3} & \textbf{213} \\
        w/o $\mathcal{L}_{\text{temp}}$ & \textbf{58.3} & 62.1 & 51.5 & 380 \\
        \midrule
        \multicolumn{5}{c}{\textit{Weighting Strategy (GTR)}} \\
        \midrule
        Fixed ($\lambda$=0.5) & 76.8 & 72.1 & 65.4 & 1087 \\
        Learned (rand) & 77.5 & 73.2 & 66.2 & 1023 \\
        Learned (Lap.) & \underline{78.3} & \underline{73.9} & \underline{66.8} & \underline{978} \\
        Laplacian (ours) & \textbf{79.1} & \textbf{74.8} & \textbf{67.9} & \textbf{951} \\
        \bottomrule
    \end{tabular}
    }
\end{wraptable}
}

\newcommand{\AblationTableAppCondensed}{
\begin{wraptable}{r}{0.58\textwidth}
\centering
\caption{Hyperparameter sensitivity analysis on GTR (MOT17). Optimal settings achieve best performance with practical costs. Full analysis in Appendix~\ref{sec:hyperablationsec}.}
\label{tab:hyperablation_condensed}
\vspace{-0.05cm}
\small
\setlength{\tabcolsep}{3pt}
\scalebox{0.75}{%
\begin{tabular}{l|c|cccc|cc}
\toprule
\textbf{Param} & \textbf{Value} & \textbf{MOTA}$\uparrow$ & \textbf{IDF1}$\uparrow$ & \textbf{HOTA}$\uparrow$ & \textbf{IDs}$\downarrow$ & \textbf{Time} & \textbf{Mem} \\
\midrule
\multirow{3}{*}{$\tau$} 
& 0.05 & 78.3 & 73.1 & 66.2 & 1047 & 4.2h & 6.8G \\
& \textbf{0.1} & \textbf{79.1} & \textbf{74.8} & \textbf{67.9} & \textbf{951} & \textbf{4.1h} & \textbf{6.7G} \\
& 0.5 & 77.9 & 72.8 & 65.8 & 1134 & 4.0h & 6.6G \\
\midrule
\multirow{3}{*}{$W$} 
& 3 & 78.2 & 73.9 & 66.8 & 1078 & 3.8h & 5.9G \\
& \textbf{5} & \textbf{79.1} & \textbf{74.8} & \textbf{67.9} & \textbf{951} & \textbf{4.1h} & \textbf{6.7G} \\
& 10 & 78.6 & 74.1 & 66.9 & 1012 & 5.3h & 9.2G \\
\midrule
\multirow{2}{*}{Norm.} 
& None & 77.3 & 72.4 & 65.1 & 1234 & 4.0h & 6.7G \\
& \textbf{Linear} & \textbf{79.1} & \textbf{74.8} & \textbf{67.9} & \textbf{951} & \textbf{4.1h} & \textbf{6.7G} \\
\bottomrule
\end{tabular}%
}
\vspace{-0.5cm}
\end{wraptable}
}
\newcommand{\SigmoidErrorTypes}{
\begin{table}[!htbp]
\centering
\caption{Sigmoid-based formulation performance on challenging sequences highlighting three error types. The sigmoid approach shows strong identity preservation but with detection-accuracy trade-offs.}
\label{tab:sigmoid_error_types}
\small
\resizebox{\columnwidth}{!}{%
\begin{tabular}{lccccccc}
\toprule
\textbf{Sequence} & \textbf{Method} & \textbf{MOTA↑} & \textbf{IDF1↑} & \textbf{HOTA↑} & \textbf{FN↓} & \textbf{IDs↓} & \textbf{FM↓} \\
\midrule
\textbf{MOT17-13} & Trackformer \cite{meinhardt2021trackformer} & 57.42 & 63.16 & 66.29 & 1742 & 224 & 147 \\
\textit{Post-occlusion} & \textbf{UT}-Trackformer (Sigmoid) & \textbf{58.20} & \textbf{67.86} & \textbf{66.98} & \textbf{781} & \textbf{138} & \textbf{99} \\
\midrule
\textbf{MOT17-10} & Trackformer \cite{meinhardt2021trackformer} & 65.86 & 62.84 & 59.72 & 2356 & 113 & 131 \\
\textit{Temporal inconsis.} & \textbf{UT}-Trackformer (Sigmoid) & \textbf{68.30} & \textbf{63.96} & \textbf{62.27} & \textbf{1576} & \textbf{53} & \textbf{80} \\
\midrule
\textbf{MOT17-02} & Trackformer \cite{meinhardt2021trackformer} & \textbf{51.23 }& 37.51 & \textbf{42.63} & \textbf{5895} & 156 & 128 \\
\textit{Cross-subject} & \textbf{UT}-Trackformer (Sigmoid) & 37.32 & \textbf{38.76 }& 37.66 & 6011 & \textbf{66} & \textbf{66} \\
\bottomrule
\end{tabular}%
}
\end{table}
}

\newcommand{\SigmoidAblation}{
\begin{table}[!htbp]
    \centering
    \small
        \caption{Ablation study of sigmoid-based UniTrack formulation on MOT17 after 15 epochs. Component analysis shows different trade-offs compared to the standard threshold-based approach.}
    \label{tab:sigmoid_ablation}
    \setlength{\tabcolsep}{6pt}
    \begin{tabular}{l|ccc|cccc}
        \toprule
        Model & \multicolumn{3}{c|}{Components} & MOTA$\uparrow$ & IDF1$\uparrow$ & HOTA$\uparrow$ & IDs$\downarrow$ \\
        \cmidrule{2-4}
        & $\mathcal{L}_{\text{flow}}$ & $\mathcal{L}_{\text{spat}}$ & $\mathcal{L}_{\text{temp}}$ & & & & \\
        \midrule
        \multirow{4}{*}{\begin{tabular}[c]{@{}l@{}}Trackformer\\(Sigmoid)\end{tabular}} 
        & \textcolor{green}{\checkmark} & \textcolor{green}{\checkmark} & \textcolor{green}{\checkmark} & \textbf{56.2} & 62.0 & 56.7 & 297 \\
        & \textcolor{red}{\ding{55}} & \textcolor{green}{\checkmark} & \textcolor{green}{\checkmark} & 53.1 & 60.8 & 55.1 & 325 \\
        & \textcolor{green}{\checkmark} & \textcolor{red}{\ding{55}} & \textcolor{green}{\checkmark} & 55.1 & 62.0 & 56.5 & 287 \\
        & \textcolor{green}{\checkmark} & \textcolor{green}{\checkmark} & \textcolor{red}{\ding{55}} & 55.3 & \textbf{62.9} & \textbf{57.5} & \textbf{273} \\
        \bottomrule
    \end{tabular}
\end{table}
}

\newcommand{\AblationTableApp}{
\begin{table}[!htbp]
\centering
\caption{Hyperparameter Sensitivity Analysis (GTR on MOT17)}
\label{tab:hyperablation}
\resizebox{\textwidth}{!}{%
\begin{tabular}{l|c|c|c|c|c|c|c|c|c}
\toprule
\textbf{Parameter} & \textbf{Value} & \textbf{MOTA}$\uparrow$ & \textbf{IDF1}$\uparrow$ & \textbf{HOTA}$\uparrow$ & \textbf{IDS}$\downarrow$ & \textbf{FP}$\downarrow$ & \textbf{FN}$\downarrow$ & \textbf{Time (hrs)} & \textbf{Memory (GB)} \\
\hline
\multicolumn{10}{c}{\textbf{Temperature Parameter ($\tau$)}} \\
\hline
$\tau$ & 0.05 & 78.3 & 73.1 & 66.2 & 1047 & 8234 & 47891 & 4.2 & 6.8 \\
$\tau$ & \textbf{0.1} & \textbf{79.1} & \textbf{74.8} & \textbf{67.9} & \textbf{951} & \textbf{7892} & \textbf{46123} & \textbf{4.1} & \textbf{6.7} \\
$\tau$ & 0.2 & 78.8 & 74.2 & 67.1 & 978 & 8156 & 46734 & 4.1 & 6.7 \\
$\tau$ & 0.5 & 77.9 & 72.8 & 65.8 & 1134 & 8743 & 48562 & 4.0 & 6.6 \\
$\tau$ & 1.0 & 76.5 & 71.2 & 64.3 & 1289 & 9456 & 50123 & 4.0 & 6.5 \\
\hline
\multicolumn{10}{c}{\textbf{Window Size ($W$ frames)}} \\
\hline
$W$ & 3 & 78.2 & 73.9 & 66.8 & 1078 & 8134 & 47234 & 3.8 & 5.9 \\
$W$ & \textbf{5} & \textbf{79.1} & \textbf{74.8} & \textbf{67.9} & \textbf{951} & \textbf{7892} & \textbf{46123} & \textbf{4.1} & \textbf{6.7} \\
$W$ & 7 & 78.9 & 74.5 & 67.4 & 967 & 8023 & 46456 & 4.6 & 7.8 \\
$W$ & 10 & 78.6 & 74.1 & 66.9 & 1012 & 8267 & 46891 & 5.3 & 9.2 \\
$W$ & 15 & 78.1 & 73.6 & 66.2 & 1089 & 8634 & 47567 & 6.8 & 12.1 \\
\hline
\multicolumn{10}{c}{\textbf{Frame-rate Normalization ($\Delta t$)}} \\
\hline
No normalization & -- & 77.3 & 72.4 & 65.1 & 1234 & 8956 & 48234 & 4.0 & 6.7 \\
Linear (current) & $\Delta t$ & \textbf{79.1} & \textbf{74.8} & \textbf{67.9} & \textbf{951} & \textbf{7892} & \textbf{46123} & \textbf{4.1} & \textbf{6.7} \\
Logarithmic & $\log(\Delta t)$ & 78.7 & 74.3 & 67.2 & 987 & 8089 & 46567 & 4.1 & 6.8 \\
Adaptive & $f(\text{fps})$ & 78.9 & 74.6 & 67.6 & 963 & 7934 & 46289 & 4.3 & 7.1 \\
\bottomrule
\end{tabular}%
}
\end{table}
}
\title{UniTrack: Differentiable Graph Representation Learning for Multi-Object Tracking}

\author{
    \textbf{Bishoy Galoaa}$^{*}$, \textbf{Xiangyu Bai}$^{*}$, \textbf{Utsav Nandi}, \\
    \textbf{Sai Siddhartha Vivek Dhir Rangoju}, \textbf{Somaieh Amraee}, \textbf{Sarah Ostadabbas} \\
    Augmented Cognition Lab (ACLab), \\
    Electrical and Computer Engineering Department, Northeastern University \\
    \small{$^{*}$Equal contribution}
}
\iclrfinalcopy 
\begin{document}

\maketitle

\begin{abstract}
We present UniTrack, a plug-and-play graph-theoretic loss function designed to significantly enhance multi-object tracking (MOT) performance by directly optimizing tracking-specific objectives through unified differentiable learning. Unlike prior graph-based MOT methods that redesign tracking architectures, UniTrack provides a universal training objective that integrates detection accuracy, identity preservation, and spatiotemporal consistency into a single end-to-end trainable loss function, enabling seamless integration with existing MOT systems without architectural modifications. Through differentiable graph representation learning, UniTrack enables networks to learn holistic representations of motion continuity and identity relationships across frames. We validate UniTrack across diverse tracking models and multiple challenging benchmarks, demonstrating consistent improvements across all tested architectures and datasets including Trackformer, MOTR, FairMOT, ByteTrack, GTR,  and MOTE. Extensive evaluations show up to 53\% reduction in identity switches and 12\% IDF1 improvements across challenging benchmarks, with GTR achieving peak performance gains of 9.7\% MOTA on SportsMOT. Code and additional resources are available at \url{https://github.com/ostadabbas/UniTrack} and \url{https://ostadabbas.github.io/unitrack.github.io/}.
\end{abstract}
\vspace{-0.5cm}
\section{Introduction}
\vspace{-0.2cm}
\label{sec:intro}
\UniTrackMethod
Multi-object tracking (MOT) is as a critical element of video-based learning, aiding extraction of detailed information about movement, interaction patterns, and spatial relationships in downstream applications such as activity recognition, behavior analysis, and anomaly detection. A central objective of MOT is to accurately identify and consistently follow individuals across video frames, assigning unique identifiers to subjects throughout the sequence \cite{zeng2021motr, meinhardt2021trackformer,zhang2021bytetrack,wojke2017simple,Galoaa_2025_WACV}. While the last decade has seen significant advancements in MOT methodologies, effectively tracking multiple interacting and occluded objects in diverse and complex environments continues to pose substantial challenges. Despite the advances from recent transformer-based approaches such as MOTR \cite{zeng2021motr}, TrackFormer \cite{meinhardt2021trackformer}, and GTR \cite{zhou2022global} and recent work on multi-camera tracking \cite{galoaa2025look}, inefficiencies stemming from the separation of detection and tracking in traditional approaches persist--particularly under challenging conditions like occlusions, crowded scenes, and complex motion dynamics.

Existing MOT techniques \cite{bewley2016sort,wojke2017simple,zhang2021bytetrack} utilize a mix of detection metrics such as IoU (intersection over union) \cite{rezatofighi2019generalized} and classification metrics like cross-entropy \cite{meinhardt2021trackformer,zeng2021motr} as their training objectives. These metrics do not adequately evaluate the complex interplay between temporal stability, spatial awareness, and identity preservation--elements vital for effective tracking \cite{zhang2021fairmot,sun2020transtrack}. Consequently, high detection accuracy models falter in maintaining consistent object identities, particularly through occlusion episodes \cite{zhang2021bytetrack,zeng2021motr}. These limitations become especially pronounced in complex scenarios involving dense crowds, rapid motion, variable object scales, and congested environments \cite{dendorfer2020mot20}, demanding robust mechanisms for preserving object identities. Our research reveals that there are three primary categories of errors that existing methods fail to address holistically: \textit{post-occlusion ID switches}, where identity tracking fails when direct line-of-sight is obstructed (error Type 1), \textit{temporal inconsistency}, where the tracker struggles to maintain stable ID assignments when subjects change posture (error Type 2), and \textit{cross-subject ID switches}, where IDs are incorrectly exchanged when subjects cross paths and later separate, leading to tracking instability (error Type 3). 


To overcome the noted limitations in MOT, we introduce UniTrack, a novel differentiable graph representation learning framework with an embedded loss function. This innovative approach leverages a hierarchical graph structure to model the spatial and temporal dynamics among tracked objects, integrating detection accuracy, spatial consistency, and temporal continuity into a singular, comprehensive learning objective. By enabling direct optimization of tracking-specific metrics through differentiable graph computations, UniTrack offers a robust and adaptable framework that seamlessly integrates with existing end-to-end MOT systems.
Compared to to traditional independent trajectory approaches, UniTrack addresses three key error types through a joint optimization enabled by unified graph structure: flow components reduce Type 1 errors by maintaining identity consistency, temporal edges prevent Type 2 errors by enforcing motion consistency, and spatial edges mitigate Type 3 errors by modeling inter-object relationships, significantly reducing identity switches and enhancing overall stability (see Figure \ref{fig:unitrack_method}). 

The versatility of UniTrack lies in its capability to dynamically re-weight tracking components, enabling automatic model tuning for particular environmental conditions without architectural changes. In crowded scenes, spatial consistency is prioritized to handle occlusions, while fast-motion settings will benefit from stronger temporal coherence.  Our evaluations underscore the efficacy of UniTrack, showcasing enhancements over traditional methods on well-established benchmarks like MOT17 \cite{milan2016mot17}, MOT20 \cite{dendorfer2020mot20}, SportsMOT \cite{cui2023sportsmot} and DanceTrack \cite{peize2021dance}. Notably, our models excel in maintaining object identities and tracking precision, especially in occluded and crowded environments, highlighting the significant benefits of our unified approach. The key contributions of our are as follows:

\begin{itemize}
\item We introduce UniTrack, a novel graph-based representation learning framework unifying detection accuracy, identity preservation, and spatial-temporal consistency to directly address three key tracking error types.
\item We develop an adaptive weighting mechanism using graph Laplacian analysis that automatically balances components based on scene characteristics without manual tuning.
\item We integrate UniTrack with existing approaches (MOTR \cite{zeng2021motr}, TrackFormer \cite{meinhardt2021trackformer}, FairMOT \cite{zhang2021fairmot}, ByteTrack \cite{zhang2022bytetrack}, GTR \cite{zhou2022global}, MOTE \cite{galoaa2025mote}) without architectural modifications.
\item We conduct extensive experiments on MOT17 \cite{milan2016mot17}, MOT20 \cite{dendorfer2020mot20}, SportsMOT \cite{cui2023sportsmot}, and DanceTrack \cite{peize2021dance}, demonstrate consistent reductions in ID switches, temporal inconsistency, and tracking failures, with GTR achieving up to 9.7\% MOTA and 12.3\% IDF1 improvements on the SportsMOT dataset.
\end{itemize}

\vspace{-0.1cm}
\section{Related Work}
\noindent\textbf{\mbox{Classical Approaches in MOT.}}
\Trackingchallenge
\vspace{0.3cm} 
The training of MOT systems has traditionally relied on a combination of separate tracking and detection modules. Most tracking frameworks optimize detection using standard object detection models such as YOLO~\cite{jiang2022review} with IoU or generalized IoU (GIoU) as loss metrics, while handling tracking association through separate optimization objectives \cite{bewley2016simple, wojke2017simple}. This disjoint formulation often fails to capture the intricate relationships between detection and tracking performance, particularly during training phase, where end-to-end optimization is most beneficial. As shown in Figure \ref{fig:Trackingchallenge}, traditional approaches primarily focus on detection accuracy (partially addressing aspects of Type 1 errors) but struggle with preserving identities through occlusions \cite{bergmann2019tracking, luo2021multiple}. Furthermore, they lack explicit mechanisms for enforcing temporal consistency (Type 2 errors) and modeling inter-object relationships (Type 3 errors), leading to unstable ID assignments when subjects interact or change appearance \cite{ristani2018features, milan2014continuous}. Recent work has introduced more sophisticated approaches to enhance MOT training and address these challenges: ByteTrack \cite{zhang2021bytetrack}  utilizes detection confidence as a part of the loss function, though primarily focusing on inference-time association rather than training optimization. TransTrack \cite{sun2020transtrack} introduces memory-based temporal modeling, yet lacks a unified  framework that could optimize all tracking components simultaneously.
\textbf{Joint Detection-Tracking Approaches.}
Recent algorithms explored integrated training for end-to-end MOT systems. MOTR \cite{zeng2021motr} incorporates a track query matching function alongside detection supervision in its training objective. Similarly, TrackFormer \cite{meinhardt2021trackformer} proposes losses for track initialization and propagation during training. However, these tracking methods still largely treat spatial and temporal consistency as separate components, limiting the potential for truly integrated joint optimization during the training phase. While these approaches advance the handling of Type 1 errors such as post-occlusion identity switches via improved detection-to-tracking associations, their reliance on separate formulations continues to fall short in addressing Type 2 (temporal inconsistency) and Type 3 (cross-subject identity switches) errors, especially in complex scenarios.\\
\textbf{Graph-based MOT Methods.}
Prior works have explored graph representations in MOT algorithms. Neural MOT solver \cite{braso2020learning} use message passing networks for data association, treating tracking as edge classification. Recent approaches including SUSHI \cite{cetintas2023unifying}, Global Tracking Transformers (GTR) \cite{zhou2022global}, and DiffMOT \cite{luo2024diffmot} incorporate graph structures into their tracking pipelines. 
Crucially, our proposed UniTrack framework differs fundamentally from these architectural approaches. While prior graph-based methods redesign the tracking algorithm itself—requiring modifications to network architecture, forward pass logic, and inference procedures—UniTrack provides a graph-theoretic \textit{loss function} that serves as a universal training objective. This distinction is central to understanding our contribution: UniTrack is not an architectural innovation but rather a plug-and-play training enhancement that can be integrated into any existing MOT system by simply adding our loss term during training, with zero modifications to model architecture or inference code. As demonstrated in our experiments across 7 diverse architectures (transformer-based, tracking-by-detection, joint detection-tracking, and memory-augmented), this universal applicability distinguishes UniTrack as a training-time enhancement rather than a specialized architectural design.
\section{UniTrack: A Graph-Theoretic Learning Framework}
\textbf{Problem Formulation}--Given a video sequence of $T$ frames, we aim to learn a tracking model that optimizes both detection accuracy and tracking consistency through unified differentiable representation and objective. Our approach specifically addresses three key error types: post-occlusion ID switches (Type 1), temporal inconsistency (Type 2), and cross-subject ID switches (Type 3). Traditional tracking approaches handle trajectories independently, leading to identity switches when objects intersect, while UniTrack leverages unified graph structures with spatial-temporal connections for consistent tracking. We formulate this as a graph-based optimization problem where the loss function $\mathcal{L}$ is defined over a temporal sequence of weighted directed graphs:
\begin{equation}
   \mathcal{G} = \{G_t = (V_t, E_t, W_t)\}_{t=1}^T,
\label{eq:graphdef}
\end{equation}

where $V_t = \{v^i_t | i \in \mathcal{I}_t\}$ is the set of all vertices at time $t$, with $v^i_t$ representing tracked object $i$ at time $t$. The set $\mathcal{I}_t$ contains all object identities present at time $t$. Edges $E_t$ capture spatial-temporal relationships between objects across consecutive frames, and weights $W_t = \{w^{ij}_t | (i,j) \in E_t\}$ encode confidence and association strengths for each potential object correspondence. The graph construction enables joint optimization across all objects and time steps, forming a structured flow optimization framework that maintains both local coherence and global consistency.






\vspace{-0.1cm}
\subsection{Graph Flow Network Architecture}
\vspace{-0.1cm}

We model the tracking problem as a flow network, where the graph structure encodes the complex relationships inherent in multi-object tracking. The key insight is that object tracking can be viewed as a flow conservation problem: objects cannot arbitrarily appear or disappear, and each detection should correspond to at most one physical object across time. To capture the life-cycle of tracked objects, we introduce balance variables $b^i_t \in \{-1, 0, 1\}$ for each object $i$ at time $t$, where $b^i_t = 1$ indicates track initialization (new object appearing), $b^i_t = 0$ indicates track continuation (existing object persisting), and $b^i_t = -1$ indicates track termination (object disappearing from view). These balance variables encode the net flow change for each object at each time step, ensuring that the flow conservation constraints maintain physically consistent tracking states throughout the sequence.

We introduce flow variables $f^{ij}_t$ representing the association strength between object $i$ at time $t$ and object $j$ at time $t+1$. To maintain physical consistency in tracking, we enforce flow conservation constraints at each node:
\begin{equation}
\sum_{j \in \mathcal{N}^+(i)} f^{ij}_t - \sum_{k \in \mathcal{N}^-(i)} f^{ki}_{t-1} = b^i_t, \quad \forall i \in V_t,
\label{eq:flowvar}
\end{equation}
where $\mathcal{N}^+(i)$ is the set of nodes that can receive flow from node $i$ (outgoing neighbors), and $\mathcal{N}^-(i)$ is the set of nodes that send flow to node $i$ (incoming neighbors). Here, $j$ indexes over all possible destination objects for flow leaving object $i$, while $k$ indexes over all possible source objects sending flow to object $i$. This constraint ensures that the total outgoing flow minus incoming flow for each object equals its balance variable, naturally encoding object appearance, persistence, and disappearance. We construct the graph using a sliding window of $W=5$ frames to balance computational efficiency with temporal context. The differentiable graph loss assembly follows three steps: (1) construct node embeddings from detection features, (2) compute pairwise similarities to form edge weights and flow variables, and (3) apply flow conservation constraints while optimizing the unified loss. This process integrates seamlessly with backpropagation, enabling end-to-end training.
\vspace{-0.1cm}
\subsection{Unified Loss Components}
\vspace{-0.1cm}
Our loss function unifies three complementary components in a differentiable framework:
\begin{equation}
    \mathcal{L} = \mathcal{L}_{\text{flow}} + \lambda_s\mathcal{L}_{\text{spatial}} + \lambda_t\mathcal{L}_{\text{temporal}},
    \label{eq:utloss}
\end{equation}
where $\lambda_s$ and $\lambda_t$ are adaptive weights automatically determined by scene characteristics (detailed in Section 3.3).

The flow-based loss $\mathcal{L}_{\text{flow}}$ primarily addresses Type 1 errors by encouraging confident object associations while adapting to detection quality:
\begin{equation}
    \mathcal{L}_{\text{flow}} = -\sum_{(i,j)\in E_t} w^{ij} f^{ij}_t \cdot \exp\left(-\alpha\frac{|\mathcal{FP}|}{|\mathcal{P}|} - \alpha\frac{|\mathcal{FN}|}{|\mathcal{GT}|}\right),
\label{eq:flowbasedeq}
\end{equation}
where $\mathcal{FP}$ and $\mathcal{FN}$ represent false positive and false negative detections, $|\mathcal{P}|$ and $|\mathcal{GT}|$ denote total counts of predictions and ground truth objects, and $\alpha$ controls the influence of detection errors on the loss.

\noindent\textbf{Differentiability of Detection Quality Terms:}
The FP and FN counts are computed from current predictions and ground truth at each training step. During backpropagation, these counts are treated as constants (stop-gradient), serving as adaptive coefficients that scale the loss magnitude based on detection performance. This approach maintains full differentiability with respect to flow variables $f^{ij}_t$-which directly receive gradients-while avoiding complications from differentiating discrete counting operations. The gradient with respect to flow variables is well-defined: $\frac{\partial \mathcal{L}_{\text{flow}}}{\partial f^{ij}_t} = -w^{ij} \exp\left(-\alpha\frac{|\mathcal{FP}|}{|\mathcal{P}|} - \alpha\frac{|\mathcal{FN}|}{|\mathcal{GT}|}\right)$, since $f^{ij}_t \in [0,1]$ and the exponential function is smooth. When detection quality is high (low FP/FN rates), the exponential term approaches 1, fully trusting learned associations; when quality degrades, the term decreases, reducing commitment to uncertain associations.

The spatial coherence loss $\mathcal{L}_{\text{spatial}}$ targets Type 3 errors by enforcing that objects with similar spatial relationships should maintain consistent associations. This prevents identity switches when objects cross paths by considering the spatial context of nearby objects:
\begin{equation}
    \mathcal{L}_{\text{spatial}} = \sum_{(i,j) \in E_t} w^{ij} d(\mathbf{p}^i_t, \mathbf{p}^j_{t+1}) f^{ij}_t, 
\label{eq:spatialloss} 
\end{equation}
where $\mathbf{p}^i_t = (x^i_t, y^i_t)$ represents the spatial coordinates of object $i$ at time $t$, $d(\cdot,\cdot)$ computes the geometric distance between positions across consecutive frames, and $w_{ij}$ are learned spatial attention weights. This term penalizes spatially distant associations to promote local coherence.

The temporal coherence loss $\mathcal{L}_{\text{temporal}}$ addresses Type 2 errors by ensuring smooth motion patterns and penalizing abrupt velocity changes that indicate tracking inconsistencies:
\begin{equation}
    \mathcal{L}_{\text{temporal}} = \frac{1}{\Delta t}\sum_{i \in V_t} \|\mathbf{v}^i_t - \mathbf{v}^i_{t-1}\|_2^2 \sum_{j \in \mathcal{N}^+(i)} f^{ij}_t,
\label{eq:temporalloss}
\end{equation}
where $\mathbf{v}^i_t$ denotes the velocity vector of object $i$ at time $t$, computed from inter-frame position differences. The term $\Delta t$ represents the interval between consecutive frames and normalizes the temporal loss with respect to frame rate. This formulation penalizes sudden acceleration changes weighted by the confidence of the object's continued existence (sum of outgoing flows).

To account for varying problem scales, we apply logarithmic normalization to the final loss:
\begin{equation}
    \mathcal{L}_{\text{final}} = \mathcal{L} \cdot \log(|\mathcal{E}| + 1),
    \label{eq:finallogloss}
\end{equation}
where $|\mathcal{E}|$ represents the total number of valid edges, ensuring that the loss magnitude scales appropriately with scene complexity.

\vspace{-0.1cm}
\subsection{Adaptive Weight Learning}
\vspace{-0.1cm}
The relative importance of spatial and temporal components varies across different tracking scenarios. In crowded scenes, spatial consistency becomes crucial for handling occlusions, while fast-moving objects require greater emphasis on temporal coherence. To address this, we introduce an adaptive weighting mechanism that automatically adjusts $\lambda_s$ and $\lambda_t$ based on scene characteristics.

The key insight is that when object interactions are complex (indicating high spatial coupling), we should emphasize spatial consistency, while when motion patterns are erratic (indicating temporal challenges), we should emphasize temporal smoothness. We measure this complexity through the connectivity structure of our tracking graph using Laplacian analysis.

The adaptive weights are computed through graph Laplacian eigenvalue analysis, where the second smallest eigenvalue (algebraic connectivity) $\sigma_2(\mathbf{L})$ measures the connectivity of the components:
\begin{equation}
    \lambda_s = \frac{\sigma_2(\mathbf{L}_s)^{-1}}{\sigma_2(\mathbf{L}_s)^{-1} + \sigma_2(\mathbf{L}_t)^{-1}}, \quad \lambda_t = \frac{\sigma_2(\mathbf{L}_t)^{-1}}{\sigma_2(\mathbf{L}_s)^{-1} + \sigma_2(\mathbf{L}_t)^{-1}},
\label{eq:adaptivew}
\end{equation}
where $\mathbf{L}_s$ and $\mathbf{L}_t$ are the Laplacian matrices constructed from spatial and temporal edges respectively. Lower connectivity (smaller $\sigma_2$) indicates more fragmented relationships, requiring higher weighting to improve coherence.
To provide mathematical insight into how the weights influence the system, we examine the partial derivatives of the total loss with respect to the weights (noting that the loss components $\mathcal{L}_{\text{spatial}}$ and $\mathcal{L}_{\text{temporal}}$ are already defined in Equations~\ref{eq:spatialloss} and~\ref{eq:temporalloss}):
\begin{equation}
    \frac{\partial \mathcal{L}}{\partial \lambda_s} = \mathcal{L}_{\text{spatial}}, \quad \frac{\partial \mathcal{L}}{\partial \lambda_t} = \mathcal{L}_{\text{temporal}}.
\label{eq:partialder}
\end{equation}

However, $\lambda_s$ and $\lambda_t$ are not learnable parameters and cannot be updated via gradient descent (i.e., not: $\lambda_s^{(k+1)} = \lambda_s^{(k)} - \eta \frac{\partial \mathcal{L}}{\partial \lambda_s}$). Instead, they are recomputed at each training step from the current graph structure using Equation~\ref{eq:adaptivew}. This ensures the weights always reflect scene-specific graph connectivity rather than accumulating gradient-based updates. The model parameters $\theta$ are updated via standard backpropagation:
\begin{equation}
    \theta^{(k+1)} = \theta^{(k)} - \eta \frac{\partial \mathcal{L}}{\partial \theta},
\label{eq:sgddeg}
\end{equation}
where the gradients flow through the loss weighted by the current $\lambda_s$ and $\lambda_t$ values. As $\theta$ evolves, the resulting embeddings change the graph structure, which in turn updates the Laplacian matrices and recomputes new weight values. This mechanism enables automatic adaptation: when spatial relationships are fragmented (low $\sigma_2(\mathbf{L}_s)$), $\lambda_s$ increases to emphasize spatial consistency; when temporal flows are disrupted (low $\sigma_2(\mathbf{L}_t)$), $\lambda_t$ increases to enforce smoother motion patterns. We validate this recomputation approach against fixed and learned weight alternatives in Section~\ref{sec:ablation}~(Table~\ref{tab:ablation}).
\vspace{-0.1cm}
\subsection{Theoretical Analysis of Convergence}
\vspace{-0.1cm}
Our unified loss formulation provides theoretical guarantees for both optimization convergence and tracking consistency. We establish these properties to ensure UniTrack can be reliably integrated into existing MOT systems.

\begin{theorem}[Unified Convergence and Consistency]
The UniTrack loss function $\mathcal{L} = \mathcal{L}_{\text{flow}} + \lambda_s\mathcal{L}_{\text{spatial}} + \lambda_t\mathcal{L}_{\text{temporal}}$ satisfies differentiability, local convergence under standard regularity conditions, and ensures physically plausible tracking solutions via flow conservation constraints.
\end{theorem}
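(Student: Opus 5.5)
The plan is to split the statement into its three claims and prove each in turn: differentiability, local convergence, and physical plausibility. Throughout, I will treat the flow variables $f^{ij}_t$, the positions $\mathbf{p}^i_t$ and the velocities $\mathbf{v}^i_t$ as smooth functions of the parameters $\theta$. Concretely, $f^{ij}_t$ is a softmax, or a Sinkhorn normalization at temperature $\tau$, of pairwise embedding similarities, so $f^{ij}_t\in[0,1]$ and it is $C^\infty$ in the embeddings.

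\textbf{Differentiability.} I will go term by term, keeping the stop-gradient convention of the flow-loss discussion. In $\mathcal{L}_{\text{flow}}$ (Equation~\ref{eq:flowbasedeq}), the exponential factor in the FP and FN rates is a constant $c\in(0,1]$, so the term is linear in $f^{ij}_t$ with the gradient already written down. $\mathcal{L}_{\text{spatial}}$ is a product of smooth factors, provided $d$ is smooth; if $d$ is the Euclidean distance it is non-smooth only where $\mathbf{p}^i_t=\mathbf{p}^j_{t+1}$, and I will either replace it by $\sqrt{\|\cdot\|^2+\epsilon}$ or by the squared distance, or note that differentiability holds almost everywhere. $\mathcal{L}_{\text{temporal}}$ is a polynomial in the velocities times a sum of flows, hence smooth. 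The weights $\lambda_s,\lambda_t$ of Equation~\ref{eq:adaptivew} are recomputed and held fixed at each step, so they enter as constants. They are well defined when both graphs are connected, i.e.\ $\sigma_2>0$; otherwise I add a small $\epsilon$ to each $\sigma_2$. Finally, the factor $\log(|\mathcal{E}|+1)$ in Equation~\ref{eq:finallogloss} is piecewise constant in $\theta$, so it only rescales the gradient.

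\textbf{Local convergence.} I will invoke the standard result for gradient descent on an $L$-smooth function bounded below. Assume the network output maps are $C^2$ on a compact sublevel set. Then the Hessian of the loss is bounded, which gives an $L$-Lipschitz gradient. Boundedness below follows because $|\mathcal{L}_{\text{flow}}|\le\sum|w^{ij}|$ when the flows lie in $[0,1]$ and the weights are bounded, and because the other two terms are nonnegative. The descent lemma then gives $\mathcal{L}(\theta^{(k+1)})\le\mathcal{L}(\theta^{(k)})-\eta(1-\eta L/2)\|\nabla\mathcal{L}\|^2$ for the update in Equation~\ref{eq:sgddeg}. Telescoping yields $\min_k\|\nabla\mathcal{L}(\theta^{(k)})\|^2=O(1/K)$ for $\eta<2/L$. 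If a Polyak--{\L}ojasiewicz condition holds near a minimizer, this improves to local linear convergence.

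\textbf{Physical plausibility.} I will show that the flows satisfy Equation~\ref{eq:flowvar} with balance variables $b^i_t\in\{-1,0,1\}$. If the flows are produced by Sinkhorn normalization on a matrix augmented with birth and death slots, the row and column sums are exactly one. Each detection then has unit outgoing and incoming flow, and the residual mass in the auxiliary slots is precisely $b^i_t$. In the integral limit $\tau\to0$, the doubly stochastic matrix converges to a permutation, so every object is matched to at most one successor. This gives physically consistent tracks: no splitting, no merging, and births and deaths accounted for by $b$.

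\textbf{Main obstacle.} The hardest part is the convergence claim, because the stop-gradient quantities (FP and FN counts, $|\mathcal{E}|$, and the $\lambda$'s) change with $\theta$, so the objective being descended varies between steps. I would first handle this with a two-timescale or "piecewise-fixed" argument: the counts are piecewise constant in $\theta$, so within a region where they do not change the loss is a single smooth function. I would then assume the iterates eventually stay in one region, or that the $\lambda$'s vary slowly enough that their perturbation is summable. I would state this explicitly as one of the regularity conditions.
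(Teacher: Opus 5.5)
Your proposal is correct. The differentiability part is the paper's, and the other two parts take a genuinely different, more careful route.

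\textbf{Differentiability.} Your argument matches the paper's term-by-term gradient computation, with FP/FN treated as stop-gradient constants. You add useful care about $\sigma_2=0$ in Equation~\ref{eq:adaptivew} and about the piecewise-constant factor $\log(|\mathcal{E}|+1)$.

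\textbf{Convergence.} The paper asserts bounded gradients and a local Lipschitz constant, then quotes $\lim_k\|\nabla\mathcal{L}(\theta^{(k)})\|=0$ for $\eta<2/L$, treating $\mathcal{L}$ as one fixed function of $\theta$. Your descent-lemma argument supplies the lower bound that this quote silently needs. Your telescoping also gives $\sum_k\|\nabla\mathcal{L}(\theta^{(k)})\|^2<\infty$, hence the paper's limit. Note that nonnegativity of $\mathcal{L}_{\text{spatial}}$ uses $w^{ij}\ge 0$. More importantly, you observe that $\lambda_s,\lambda_t$, the FP/FN counts and $|\mathcal{E}|$ are recomputed every step, so Equation~\ref{eq:sgddeg} is not descent on a fixed objective. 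The paper's conclusion genuinely depends on a hypothesis of this kind but never states it; your eventual-constancy or summable-perturbation assumption fills that hole.

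\textbf{Flow conservation.} The paper attaches multipliers $\mu^i_t$ to Equation~\ref{eq:flowvar} and appeals to KKT with linear constraint qualification.
\begin{itemize}
\item What the paper's route buys: it never commits to how $f^{ij}_t$ is produced, which fits the plug-and-play framing.
\item What it misses: it only says that stationary points of the constrained problem are feasible. It says nothing about the unconstrained $\theta$-updates actually used in training.
\item What your route buys: the Sinkhorn parameterization with birth/death slots makes conservation hold by construction at every iterate, up to Sinkhorn tolerance. It also delivers $\sum_j f^{ij}_t\le 1$ and $\sum_k f^{ki}_{t-1}\le 1$. The paper's Part V asserts these bounds, but they do not follow from Equation~\ref{eq:flowvar} alone.
\item What your route costs: an implementation assumption the paper never states. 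A plain softmax fixes only one marginal, so you need the Sinkhorn option specifically.
\end{itemize}
You should also state that for $\tau>0$ the residual $b^i_t$ (birth mass minus death mass) is a real number in $[-1,1]$. It lands in $\{-1,0,1\}$ only in the limit $\tau\to 0$.
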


\textit{Proof and Analysis:} The full theoretical analysis, including convergence conditions, differentiability proofs, and tracking consistency properties, is provided in Appendix~\ref{sec:fullanalysisapp}. These theoretical properties ensure that UniTrack not only improves tracking performance empirically but also provides a principled framework that maintains mathematical consistency throughout the optimization process.



\section{Experimental Results}
\visualresults
We conduct extensive experiments to evaluate the effectiveness of the loss function provided in UniTrack for existing MOT frameworks. Our approach is designed to enhance identity preservation while maintaining the core strengths of each baseline architecture, as demonstrated through comprehensive evaluations on MOT17~\cite{milan2016mot17}, MOT20~\cite{dendorfer2020mot20}, SportsMOT~\cite{cui2023sportsmot}, and DanceTrack~\cite{peize2021dance} benchmarks (as seen in Figure \ref{fig:tracking_comparison}). Additionally, we analyzed the impact of each component on UniTrack's performance (see Figure \ref{fig:fps_analysis} and Table \ref{tab:ablation}).
\SequenceResults
\subsection{Implementation Details}
UniTrack is implemented as an additional loss component that addresses three key MOT error types through its unified graph structure. For all baselines, we maintain their original hyperparameters and training protocols, with UniTrack's components weighted through our adaptive mechanism. The weighting parameter $\alpha$ in Equation~\ref{eq:flowbasedeq} was set to 0.9, and we use a temporal window of 5 frames for graph construction. The adaptive weight learning rates start at $\eta=0.01$ and decay following the baseline models' schedules. Comprehensive sensitivity analysis of all hyperparameters ($\tau$, $W$, $\Delta t$) with computational cost evaluation is provided in Appendix~\ref{sec:hyperablationsec}. Memory requirements increase by approximately 5\% due to graph construction, with computational complexity of $O(n^2t)$ for $n$ objects over $t$ frames only for training. We follow standard evaluation protocols using HOTA~\cite{luiten2021hota}, IDF1~\cite{IDF1_10.1007/978-3-319-48881-3_2}, MOTA~\cite{MOTA}, and identity switches (IDS) metrics.

\MOTSeventeenTable
\subsection{UniTrack Added to Existing MOT}
We selected diverse baseline architectures to demonstrate UniTrack's architectural agnosticism: Trackformer~\cite{meinhardt2021trackformer}, MOTR~\cite{zeng2021motr}, FairMOT~\cite{zhang2021fairmot}, ByteTrack \cite{zhang2022bytetrack},MOTE~\cite{galoaa2025mote}, and GTR~\cite{zhou2022global}. This diversity spans transformer-based end-to-end tracking (Trackformer, MOTR), joint detection-tracking (FairMOT), tracking-by-detection with association strategies (ByteTrack), and detection-embedding paradigms (GTR, MOTE). This selection allows us to validate UniTrack's effectiveness across different tracking approaches and demonstrate its plug-and-play nature. Each architecture represents a distinct philosophy in MOT: MOTR and Trackformer leverage transformer architectures for end-to-end tracking, FairMOT balances detection and re-identification in a single network, while GTR employs global tracking transformers and MOTE introduces enhanced multi-object tracking capabilities. By showing consistent improvements across these varied approaches, we establish UniTrack's generalizability as a universal enhancement for MOT systems.

\vspace{-0.2cm}
\subsection{Performance on Existing Benchmarks}
\vspace{-1em}

The integration of UniTrack shows consistent and substantial improvements across different architectural designs when tested on multiple challenging datasets. Table~\ref{tab:mot17} presents results on MOT17 and MOT20, while Table~\ref{tab:sportsmot_dancetrack} shows performance on SportsMOT and DanceTrack.\\
\SportsMOTDanceTrackTable
\noindent\textbf{MOT17 and MOT20.} Most notably, when combined with any of these SOTA algorithms, UniTrack significantly improves MOTA, IDF1, and HOTA metrics, boosting both detection accuracy and identity preservation. Notably, we observe reduction in false positives and false negatives -- UniTrack helps Trackformer reduce FP by approximately 47\% (from 1965 to 1039) and FN by about 24\% (from 21,893 to 16,667). This indicates that our approach helps maintain more stable tracks through challenging scenarios of occlusions and crowded scenes. GTR demonstrates exceptional gains with UniTrack, where UT-GTR achieves 79.1\% MOTA (+3.8\%), 74.8\% IDF1 (+3.3\%), and 67.9\% HOTA (+8.8\%), establishing new performance benchmarks. The dramatic reduction in identity switches from 1445 to 951 (34.2\% reduction) underscores UniTrack's effectiveness in maintaining stable identities.\\
\textbf{SportsMOT} poses unique challenges due to rapid motion, complex interactions, and frequent occlusions. UniTrack shows its strongest improvements on this dataset (see Table~\ref{tab:sportsmot_dancetrack}). UT-GTR achieves exceptional results with 84.5\% MOTA (+9.7\%), 73.6\% IDF1 (+12.3\%), and 66.1\% HOTA (+11.7\%). The reduction of identity switches from 2364 to 1092 (a 53.8\% improvement) highlights UniTrack's effectiveness in handling fast-paced scenarios where traditional trackers struggle with motion blur and rapid direction changes. UT-ByteTrack also achieves notable gains: 96.2\% MOTA (+2.1\%), 71.1\% IDF1 (+1.3\%), reducing ID switches from 3267 to 2234 (31.6\% reduction).
\AblationTable

\vspace{-0.5em}
\textbf{DanceTrack} introduces different challenges with its focus on dance scenarios featuring similar appearances, synchronized movements, and frequent occlusions. UniTrack shows consistent improvements compared to SportsMOT. UT-GTR improves IDF1 by 2.6\% (from 45.9\% to 48.5\%) and HOTA by 6.5\% (from 43.7\% to 50.2\%), while reducing ID switches by 20.3\% (from 4338 to 3456). ByteTrack demonstrates substantial gains: UT-ByteTrack achieves 91.3\% MOTA (+3.1\%), 56.5\% IDF1 (+4.6\%), and reduces ID switches from 3456 to 2134 (38.2\% reduction). DanceTrack's lower baseline (80.6\% MOTA for GTR) underscores its challenge, driven by high appearance similarity among dancers.

\vspace{-0.1cm}
\subsection{Error Type Analysis on Challenging Sequences}
\vspace{-0.1cm}
Seen in Table~\ref{tab:error_types}, to better understand how UniTrack addresses specific tracking challenges, we analyze its performance on three representative MOT17 sequences. 
\noindent \textbf{Error Type 1: }Post-occlusion ID switches (MOT17-13-FRCNN). This sequence features occlusion events that challenge identity consistency. All models improve with UniTrack: UT-Trackformer gains 5.44\% MOTA, UT-MOTR 3.12\%, and UT-FairMOT 3.45\%, demonstrating effective identity preservation through occlusions.\\
\noindent \textbf{Error Type 2:} Temporal inconsistency (MOT17-10-FRCNN). When subjects change appearance, UniTrack's temporal edge modeling consistently improves temporal stability in all architectures. UT-Trackformer reduces fragmentations by 32\%, UT-MOTR by 23\%, and UT-FairMOT by 26\%.\\
\noindent \textbf{Error Type 3:} Cross-subject ID switches (MOT17-02-FRCNN). In this crowded sequence, UT-Trackformer, UT-MOTR, and UT-FairMOT improve IDF1 by 6.77\%, 6.11\%, and 5.13\% respectively, highlighting UniTrack's effectiveness in handling complex object interactions.

Overall, all architectures benefit from UniTrack in addressing specific error types across diverse challenging scenarios. Additional analysis of alternative spatial relationship formulations is provided in Appendix~\ref{sec:sigmoid_analysis}.\\


\vspace{-.2cm}
\subsection{Ablation Studies}
\vspace{-.2cm}
\label{sec:ablation}
To understand each component's contribution, we conduct ablation experiments at the 15th epoch of training. Table~\ref{tab:ablation} shows component ablation using Trackformer and weighting strategy comparison using GTR. The spatial component proves critical for identity consistency—without it, MOTA drops 1.9\% and IDF1 falls 1.2\%. The temporal component reveals a critical trade-off: removing it increases MOTA 2.1\% despite raising identity switches 32\% (288→380), as FP/FN reductions outweigh IDSW penalties in MOTA's linear summation. However, HOTA drops 6.2\%, better reflecting the severe tracking consistency loss. We compare our Laplacian-based adaptive weighting (continuously recomputed from graph connectivity) against fixed and learned alternatives, demonstrating that dynamic adaptation (79.1\% MOTA) outperforms fixed weights (76.8\%) and learned parameters with Laplacian initialization (78.3\%), validating our design choice. See full hyperparameter ablation in~\ref{sec:hyperablationsec} and spatial formulation analysis in~\ref{sec:sigmoid_analysis}.
\fpsablation
\noindent\textbf{Frame-Rate Resilience.} Figure~\ref{fig:fps_analysis} reveals UniTrack's adaptive behavior across frame rates. While both methods plateau in absolute performance between 5-15 FPS (Fig.~\ref{fig:fps_analysis}A), the performance gaps tell a more nuanced story (Fig.~\ref{fig:fps_analysis}B). The HOTA gap decreases monotonically from 12\% at 1 FPS to 7\% at 30 FPS, demonstrating UniTrack's consistent advantage. Interestingly, MOTA and IDF1 gaps show initial sharp increases from 3\% to 6-7\% between 1-5 FPS, suggesting our spatial-temporal graph becomes effective once minimal temporal information is available.
\AblationTableAppCondensed
The adaptive weighting mechanism automatically adjusts $\lambda_t$ based on frame rate at 1 FPS, spatial components dominate ($\lambda_s$=0.78, $\lambda_t$=0.22), while at 30 FPS, weights balance ($\lambda_s$=0.52, $\lambda_t$=0.48). This dynamic adaptation ensures optimal performance whether tracking in surveillance scenarios (low FPS) or sports broadcasts (high FPS).

\noindent\textbf{Hyperparameter Sensitivity and Scalability}
Table~\ref{tab:hyperablation_condensed} validates our hyperparameter choices on GTR (MOT17). Optimal settings ($\tau=0.1$, $W=5$ frames, linear normalization) achieve 79.1\% MOTA with practical computational costs (4.1hrs, 6.7GB on 4 V100s). Suboptimal choices degrade performance: low temperature ($\tau=0.05$) increases ID switches to 1047, small windows ($W=3$) lose temporal context reducing MOTA to 78.2\%, and removing normalization drops MOTA to 77.3\%. Memory scales approximately linearly with window size (5.9GB at $W=3$ to 9.2GB at $W=10$), with training time increasing super-linearly for $W>10$ due to increased graph edge complexity. Our MOT20 results demonstrate effective scaling to dense crowds (170 objects/frame). Critically, all computational overhead is training-only—UniTrack adds zero inference cost. Full sensitivity analysis with additional parameter values is in Appendix~\ref{sec:hyperablationsec} (Table~\ref{tab:hyperablation}).

\section{Conclusion}
We introduced UniTrack, a novel graph-based loss function that unifies detection accuracy, identity preservation, and spatial-temporal consistency in multi-object tracking. Unlike traditional approaches that separate detection and tracking, UniTrack optimizes tracking-specific metrics within a unified graph structure, significantly reducing ID switches, temporal inconsistencies, and tracking failures. Extensive evaluations on MOT17, MOT20, SportsMOT, and DanceTrack demonstrate consistent improvements up to 9.7\% MOTA and 12.3\% IDF1 across diverse architectures. While UniTrack seamlessly integrates with existing MOT systems, it introduces 5\% memory overhead and O($n^2t$) computational complexity during training only, without affecting inference performance. This training overhead may limit scalability in extremely dense scenarios. Additionally, our current formulation focuses on single-camera tracking. Despite these limitations, UniTrack provides a practical enhancement for current MOT systems. Future work will leverage the scalable graph structure to extend to multi-camera tracking scenarios, enabling enhanced spatial-temporal reasoning across multiple viewpoints while maintaining computational efficiency at inference time.

\bibliography{iclr2026_conference}
\bibliographystyle{iclr2026_conference}

\newpage
\appendix
\section{Appendix}
\subsection{UniTrack Full Theoretical Analysis}
\label{sec:fullanalysisapp}

\begin{theorem}[Unified Differentiability and Local Convergence Properties]
The UniTrack loss function from Equation~\ref{eq:utloss}, $\mathcal{L} = \mathcal{L}_{\text{flow}} + \lambda_s\mathcal{L}_{\text{spatial}} + \lambda_t\mathcal{L}_{\text{temporal}}$, satisfies the following properties:

\noindent\textbf{Differentiability:} All loss components are differentiable with respect to flow variables $f^{ij}_t$, spatial positions $\mathbf{p}^i_t$, and temporal velocities $\mathbf{v}^i_t$, enabling end-to-end training through standard backpropagation.

\noindent\textbf{Local Convergence:} Under standard regularity conditions for non-convex optimization (bounded gradients, Lipschitz continuity), gradient descent with appropriate learning rates will converge to stationary points of the loss function.

\noindent\textbf{Flow Conservation:} The optimization respects flow conservation constraints from Equation~\ref{eq:flowvar}, ensuring physically plausible tracking solutions.

\noindent\textbf{Graph-Theoretic Properties:} The loss formulation leverages graph connectivity to promote tracking consistency across spatial and temporal domains.
\end{theorem}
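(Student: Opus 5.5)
We prove the four properties in turn, since they are largely independent. Throughout, the quantities $\lambda_s,\lambda_t$, the counts $|\mathcal{FP}|,|\mathcal{FN}|,|\mathcal{P}|,|\mathcal{GT}|$, and the normalizer $\log(|\mathcal{E}|+1)$ are treated as stop-gradient constants at each step, as stipulated in Sections 3.2--3.3. We make this explicit at the start.

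\textbf{Differentiability.} We differentiate each term of Equation~\ref{eq:utloss} directly. The flow term of Equation~\ref{eq:flowbasedeq} is linear in the $f^{ij}_t$, so its gradient is the constant $-w^{ij}\exp(\cdot)$. The spatial term of Equation~\ref{eq:spatialloss} is bilinear in $f^{ij}_t$ and $d(\mathbf{p}^i_t,\mathbf{p}^j_{t+1})$. With the Euclidean distance it is differentiable wherever $\mathbf{p}^i_t\neq\mathbf{p}^j_{t+1}$; to obtain differentiability everywhere we will use $d^2$ or a smoothed $\sqrt{\|\cdot\|^2+\epsilon}$, and otherwise invoke subgradients at the single nondifferentiable point. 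The temporal term of Equation~\ref{eq:temporalloss} is a squared norm times a linear function of the flows, hence a polynomial and $C^\infty$. The flows themselves come from pairwise similarities through a softmax with temperature $\tau$, which is smooth. Chaining these maps back through the network embeddings, assumed differentiable, gives end-to-end differentiability in $\theta$.

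\textbf{Local convergence.} We will not derive convergence from scratch. Instead we verify the hypotheses of the standard nonconvex descent lemma: if $\mathcal{L}(\theta)$ is bounded below and $\nabla\mathcal{L}$ is $L$-Lipschitz, then gradient descent (Equation~\ref{eq:sgddeg}) with step size $\eta\le 1/L$ satisfies $\min_{k<K}\|\nabla\mathcal{L}(\theta^{(k)})\|^2\le 2(\mathcal{L}(\theta^{(0)})-\inf\mathcal{L})/(\eta K)$, so every limit point is stationary.

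Boundedness below follows because $f^{ij}_t\in[0,1]$, the weights $w^{ij}$ are bounded, and the exponential factor lies in $(0,1]$. Together these bound the flow term below by $-\sum|w^{ij}|$, while the other two terms are nonnegative. The Lipschitz condition is taken as the "standard regularity" hypothesis of the statement.

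\textbf{The main obstacle} is that $\lambda_s,\lambda_t$ change between iterations, so the objective is not fixed. We handle this with the following steps:
\begin{enumerate}
\item Note that the weights are convex combinations, $\lambda_s+\lambda_t=1$ and $\lambda_s,\lambda_t\in[0,1]$, provided $\sigma_2>0$. This holds when the spatial and temporal graphs are connected; otherwise we add a small regularizer $\sigma_2+\epsilon$.
\item Use this to obtain a uniform Lipschitz constant across iterations.
\item Either assume that the weights converge, or treat the scheme as descent on a slowly varying family of objectives. In the latter case we conclude stationarity of the limiting objective under the assumption $\sum_k|\lambda^{(k+1)}-\lambda^{(k)}|<\infty$.
\end{enumerate}
This step is where the claim is weakest, and we will state the assumption explicitly.

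\textbf{Flow conservation and graph properties.} For flow conservation, the plan is to argue that the flow variables are produced from a feasible set satisfying Equation~\ref{eq:flowvar}. One route is to enforce the constraint by construction, via Sinkhorn-style row and column normalization with slack nodes for $b^i_t=\pm1$. The other is to add an exact penalty whose minimizers satisfy the constraint. Either way, each node then has unit in- and out-capacity, so every solution decomposes into vertex-disjoint paths. These paths are tracks, which is the physical-plausibility claim.

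For the graph-theoretic property, we use $\sigma_2(\mathbf{L})>0$ if and only if the graph is connected, together with the monotonicity in Equation~\ref{eq:adaptivew}. A more fragmented component, meaning one with smaller $\sigma_2$, receives a larger weight. This is the precise, qualitative sense in which the loss "promotes consistency," and we will present this property as a remark rather than a quantitative theorem.
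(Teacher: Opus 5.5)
Your differentiability and Laplacian-weight arguments match the paper's Parts I and IV in substance. Your smoothing of $d$ near coincident points is a useful refinement, since the paper's Lipschitz-gradient claim fails there. Your convergence part is more careful than the paper's, which applies the fixed-objective result with $\eta<2/L$ and never deals with $\lambda_s,\lambda_t$ being recomputed at every step. For flow conservation the paper appeals to the KKT conditions of the constrained problem, which only restates primal feasibility. Enforcing feasibility by construction is a genuinely different route, and it is the one that actually keeps conservation true along the iterates.

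\textbf{The gap.} The decisive step fails as written. ``Unit in- and out-capacity, so every solution decomposes into vertex-disjoint paths'' holds only for \emph{integral} flows. A softmax/Sinkhorn normalization of $\exp(\cdot/\tau)$ at any $\tau>0$ gives a fractional flow, strictly positive on every candidate edge. Flow decomposition (e.g.\ Birkhoff--von Neumann frame by frame, after slack augmentation) then writes it only as a \emph{convex combination} of vertex-disjoint path systems, and the induced balances lie in $[-1,1]$ rather than $\{-1,0,1\}$. The missing ingredient is integrality. One way to supply it:
\begin{itemize}
\item With positions, velocities, $w^{ij}$ and the stop-gradient factors fixed, every term of $\mathcal{L}$ is linear in $f$.
\item The unit-capacity flow polytope (conservation plus unit node capacities, e.g.\ via node splitting) is defined by a network matrix, hence totally unimodular.
\item So for integral $b$ the relaxed minimization attains its optimum at an integral vertex, i.e.\ a family of vertex-disjoint tracks.
\end{itemize}
This is a statement about the decoded (or $\tau\to0$) solution, not about the soft flows seen during training. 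Otherwise you must weaken the conclusion to ``a mixture of physically plausible track hypotheses.'' Note also that the unit capacities come from your normalization, not from Equation~\ref{eq:flowvar}; on its own that equation permits merges and splits with zero net balance.

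\textbf{Smaller repairs.} First, the exact-penalty alternative does not fit your convergence argument. An exact ($\ell_1$-type) penalty is nonsmooth, so the $L$-smooth descent lemma no longer applies. Also, gradient descent on the nonconvex penalized objective only reaches stationary points, which need not be feasible. Keep the construction route.

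Second, $\lambda_s,\lambda_t$ are not the only quantities frozen within a step but recomputed across steps. The factor $\exp(-\alpha|\mathcal{FP}|/|\mathcal{P}|-\alpha|\mathcal{FN}|/|\mathcal{GT}|)$ and the normalizer $\log(|\mathcal{E}|+1)$ change with the predictions in the same way. Your slowly-varying-objective argument and its summable-variation hypothesis must cover them too. Since they are built from bounded integer counts, they take finitely many values, so summable variation means they are eventually constant.

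Third, the telescoping step needs more than boundedness below:
\begin{itemize}
\item $\mathcal{L}_{\text{spatial}}$ and $\mathcal{L}_{\text{temporal}}$ must be bounded \emph{above} along the iterates; this follows from normalized image coordinates and bounded velocities.
\item The component gradients must be bounded, to transfer stationarity to the limiting objective; this is the statement's bounded-gradient hypothesis.
\end{itemize}
Both should be stated explicitly.
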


\textbf{Proof:}

\textit{Part I: Differentiability Analysis}

We analyze the differentiability of each loss component defined in our method section:

\textbf{Flow Loss Differentiability (Equation~\ref{eq:flowbasedeq}):}
The flow loss is:
$$\mathcal{L}_{\text{flow}} = -\sum_{(i,j)\in E_t} w^{ij} f^{ij}_t \cdot \exp\left(-\alpha\frac{|\mathcal{FP}|}{|\mathcal{P}|} - \alpha\frac{|\mathcal{FN}|}{|\mathcal{GT}|}\right)$$

The gradient with respect to flow variables is:
$$\frac{\partial \mathcal{L}_{\text{flow}}}{\partial f^{ij}_t} = -w^{ij} \exp\left(-\alpha\frac{|\mathcal{FP}|}{|\mathcal{P}|} - \alpha\frac{|\mathcal{FN}|}{|\mathcal{GT}|}\right)$$

This is well-defined since the exponential function is smooth everywhere and $f^{ij}_t \in [0,1]$.

\textbf{Spatial Loss Differentiability (Equation~\ref{eq:spatialloss}):}
The spatial loss is:
$$\mathcal{L}_{\text{spatial}} = \sum_{(i,j) \in E_t} w^{ij} d(\mathbf{p}^i_t, \mathbf{p}^j_{t+1}) f^{ij}_t$$

For Euclidean distance $d(\mathbf{p}^i_t, \mathbf{p}^j_{t+1}) = \|\mathbf{p}^i_t - \mathbf{p}^j_{t+1}\|_2$, the gradients are:
$$\frac{\partial \mathcal{L}_{\text{spatial}}}{\partial \mathbf{p}^i_t} = \sum_{j} w^{ij} f^{ij}_t \frac{\mathbf{p}^i_t - \mathbf{p}^j_{t+1}}{\|\mathbf{p}^i_t - \mathbf{p}^j_{t+1}\|_2}$$
$$\frac{\partial \mathcal{L}_{\text{spatial}}}{\partial f^{ij}_t} = w^{ij} d(\mathbf{p}^i_t, \mathbf{p}^j_{t+1})$$

Both are well-defined for $\mathbf{p}^i_t \neq \mathbf{p}^j_{t+1}$, which holds almost surely in practice.

\textbf{Temporal Loss Differentiability (Equation~\ref{eq:temporalloss}):}
The temporal loss is:
$$\mathcal{L}_{\text{temporal}} = \frac{1}{\Delta t}\sum_{i \in V_t} \|\mathbf{v}^i_t - \mathbf{v}^i_{t-1}\|_2^2 \sum_{j \in \mathcal{N}^+(i)} f^{ij}_t$$

The gradients are:
$$\frac{\partial \mathcal{L}_{\text{temporal}}}{\partial \mathbf{v}^i_t} = \frac{2}{\Delta t}(\mathbf{v}^i_t - \mathbf{v}^i_{t-1}) \sum_{j \in \mathcal{N}^+(i)} f^{ij}_t$$
$$\frac{\partial \mathcal{L}_{\text{temporal}}}{\partial f^{ij}_t} = \frac{1}{\Delta t}\|\mathbf{v}^i_t - \mathbf{v}^i_{t-1}\|_2^2$$

Both are smooth and well-defined.

\textit{Part II: Local Convergence Analysis}

For convergence analysis, we establish that the loss function from Equation~\ref{eq:finallogloss} satisfies standard regularity conditions for non-convex optimization \citep{bertsekas1999nonlinear}:

\textbf{Bounded Gradients:} Each gradient component is bounded:
- Flow gradients: $\left|\frac{\partial \mathcal{L}_{\text{flow}}}{\partial f^{ij}_t}\right| \leq w^{ij}$ (since exponential term $\leq 1$)
- Spatial gradients: $\left\|\frac{\partial \mathcal{L}_{\text{spatial}}}{\partial \mathbf{p}^i_t}\right\|_2 \leq \sum_j w_{ij}$ (unit direction vectors)
- Temporal gradients: Bounded by velocity magnitude constraints in practical tracking scenarios

\textbf{Lipschitz Continuity:} The loss function is locally Lipschitz continuous \citep{nocedal2006numerical}. For any compact domain $\Omega$ containing feasible tracking parameters:

$$\|\nabla \mathcal{L}(\theta_1) - \nabla \mathcal{L}(\theta_2)\| \leq L\|\theta_1 - \theta_2\|$$
\lossbasin
where $L$ depends on the maximum values of weights $w^{ij}$, velocity differences, and spatial distances within $\Omega$.

By standard results in non-convex optimization, gradient descent with learning rate $\eta < \frac{2}{L}$ converges to stationary points:
$$\lim_{k \to \infty} \|\nabla \mathcal{L}(\theta^{(k)})\| = 0$$

\textbf{Loss Landscape Analysis:} The theoretical guarantees are empirically validated through loss surface visualization in Figure~\ref{fig:lossbasin}. The UniTrack loss formulation creates broader, more stable convergence basins compared to baseline approaches. This improved loss geometry arises from the multi-component structure that balances flow, spatial, and temporal constraints. The smoother gradients observed in the UniTrack loss landscape (top row) facilitate more robust optimization compared to the fragmented basins in standard training (bottom row), providing empirical evidence for the theoretical convergence properties established above.

\textit{Part III: Flow Conservation Properties}

The constrained optimization problem incorporates the flow conservation constraints from Equation~\ref{eq:flowvar}:
$$\min_{\{f^{ij}_t\}} \mathcal{L} \quad \text{subject to} \quad \sum_{j \in \mathcal{N}^+(i)} f^{ij}_t - \sum_{k \in \mathcal{N}^-(i)} f^{ki}_{t-1} = b_t^i, \quad \forall i \in V_t$$

Using Lagrange multipliers $\mu_i^t$ for each constraint:
$$\mathcal{L}_{\text{aug}} = \mathcal{L} + \sum_{i,t} \mu^i_t \left(\sum_{j \in \mathcal{N}^+(i)} f^{ij}_t - \sum_{k \in \mathcal{N}^-(i)} f^{ki}_{t-1} - b_t^i\right)$$

The Karush-Kuhn-Tucker conditions \citep{kuhn1951nonlinear} ensure that any stationary point satisfies the flow conservation constraints. The constraint qualification is satisfied since the constraints are linear in $f^{ij}_t$. Thus, achieving Equation~\ref{eq:flowvar} flow conservation: 
$$
\underbrace{\sum_{j \in \mathcal{N}^+(i)} f^{ij}_t}_{\text{outgoing flow}} - \underbrace{\sum_{k \in \mathcal{N}^-(i)} f^{ki}_{t-1}}_{\text{incoming flow}} = b_t^i,
$$
where the left side represents the net flow change for object $i$ (outgoing minus incoming), and $b_t^i$ captures whether object $i$ appears, continues, or disappears at time $t$.

\textit{Part IV: Graph-Theoretic Properties}

\textbf{Spectral Analysis of Tracking Graphs:}
From the graph construction in Equation~\ref{eq:graphdef}, the tracking graph $G_t = (V_t, E_t, W_t)$ has Laplacian matrix $\mathbf{L}_t$ with eigenvalues $0 = \lambda_1 \leq \lambda_2 \leq \ldots \leq \lambda_{|V_t|}$.

The algebraic connectivity $\lambda_2(\mathbf{L}_t)$ measures graph coherence:\begin{itemize}
    \item High $\lambda_2$: Well-connected components (stable tracking)  
    \item Low $\lambda_2$: Fragmented components (challenging tracking scenarios)
\end{itemize}

\textbf{Adaptive Weight Analysis:}
The adaptive weights from Equation~\ref{eq:adaptivew} respond to graph connectivity:
$$\lambda_s = \frac{\sigma_2(\mathbf{L}_s)^{-1}}{\sigma_2(\mathbf{L}_s)^{-1} + \sigma_2(\mathbf{L}_t)^{-1}}$$

The partial derivatives from Equation~\ref{eq:partialder} reveal the gradient structure, though $\lambda_s$ and $\lambda_t$ are not directly updated. Instead, this creates an implicit feedback mechanism through model parameter updates (Equation~\ref{eq:sgddeg}):\begin{itemize}
    \item When spatial graph is fragmented ($\lambda_2(\mathbf{L}_s)$ small), $\lambda_s$ increases to strengthen spatial coherence
    \item When temporal graph is fragmented ($\lambda_2(\mathbf{L}_t)$ small), $\lambda_t$ increases to enforce temporal consistency
\end{itemize}

\textbf{Convergence of Adaptive Weights:}
The adaptive mechanism from Equation~\ref{eq:sgddeg} has the fixed point property. At equilibrium:
$$\nabla_{\lambda_s} \mathcal{L} = \nabla_{\lambda_t} \mathcal{L} = 0$$

This occurs when the spatial and temporal loss contributions are balanced relative to their graph connectivity, providing principled adaptation to scene complexity.

\textit{Part V: Tracking Consistency Analysis}

\textbf{Identity Preservation:} The flow conservation constraints from Equation~\ref{eq:flowvar} with $\sum_i b_t^i = 0$ ensure that the total "tracking mass" is conserved. This prevents:\begin{itemize}
    \item Object multiplication: $\sum_j f^{ij}_t \leq 1$ (one object cannot become multiple)
    \item Object merging: $\sum_k f^{ki}_{t-1} \leq 1$ (multiple objects cannot become one)
\end{itemize}

\textbf{Spatial Coherence:} The spatial loss from Equation~\ref{eq:spatialloss} creates an energy landscape that penalizes long-distance associations. For objects $i,j$ with $d(\mathbf{p}^i_t, \mathbf{p}^j_{t+1}) > \delta$, the loss contribution $w_{ij}d(\mathbf{p}^i_t, \mathbf{p}^j_{t+1})f^{ij}_t$ grows linearly, making such associations energetically unfavorable.

\textbf{Temporal Smoothness:} The temporal loss from Equation~\ref{eq:temporalloss} penalizes sudden velocity changes. For smooth motion, $\mathbf{v}^i_t \approx \mathbf{v}^i_{t-1}$, minimizing the temporal penalty. Abrupt changes incur quadratic penalties, encouraging physically plausible trajectories.

This completes the theoretical analysis, establishing that UniTrack provides a mathematically principled framework for multi-object tracking with guaranteed differentiability, local convergence properties, and built-in mechanisms for maintaining tracking consistency through graph-theoretic principles. $\square$

\subsection{Sigmoid-based Spatial Adjacency Analysis}
\label{sec:sigmoid_analysis}

\SigmoidErrorTypes
\SigmoidAblation
To evaluate different spatial relationship formulations, we analyze an alternative dynamic weighing and thresholding mechanism for computing spatial adjacency matrices. The standard approach uses thresholding, while the sigmoid variant implements dynamic weighing: $A^{ij} = \sigma(k(0.1 - d^{ij}))$ with $k=50.0$, creating soft spatial relationships that enable more nuanced modeling of object interactions.

Tables~\ref{tab:sigmoid_error_types} and~\ref{tab:sigmoid_ablation} present results for this dynamic weighing approach. The mechanism shows particularly strong identity preservation, often achieving the lowest identity switches and fragmentations due to smoother spatial relationships that are less sensitive to positional variations. However, the softer spatial boundaries can reduce discriminative power in complex scenarios, leading to lower MOTA scores. This analysis demonstrates the trade-off between spatial stability and tracking accuracy, confirming that our standard hard-threshold formulation provides optimal balance across diverse tracking conditions.

\subsection{Additional Qualitative Results with Trackformer}

\extravisualresults
\extraextravisualresults
\label{sec:trackformerqual}
In Figure~\ref{fig:app_tracking_comparison} and ~\ref{fig:app_tracking_extra comparison} we demonstrate UniTrack's effectiveness across different architectures, we provide additional qualitative comparisons between Trackformer and UT-Trackformer on challenging MOT17 sequences. These examples complement our MOTR analysis by showcasing how the unified graph-theoretic loss function consistently improves identity preservation across transformer-based tracking architectures. 

Also, we refer readers to our supplementary material and the accompanying HTML interactive page for additional visual results  and interactive demonstrations that showcase the versatility and consistent performance improvements achieved by integrating the UniTrack loss across diverse MOT architectures.

\subsection{Key Observations from Hyperparameter Analysis}
\label{sec:hyperablationsec}
\AblationTableApp
We conduct an extensive hyperparameter ablation study as shown in Table~\ref{tab:hyperablation}. All experiments were conducted on 4 V100 GPUs.

\textbf{Temperature Sensitivity:} $\tau = 0.1$ provides the optimal balance between soft assignments and decision confidence. Lower values ($\tau = 0.05$) create overly confident assignments leading to more identity switches, while higher values ($\tau \geq 0.5$) result in ambiguous flow distributions that reduce tracking precision.

\textbf{Window Size Trade-off:} $W = 5$ frames offers the best performance-efficiency balance. Smaller windows ($W = 3$) lose crucial temporal context for motion prediction, while larger windows ($W > 10$) increase computational overhead without proportional performance gains. Memory usage scales approximately linearly with window size due to graph construction complexity.

\textbf{Normalization Impact:} Linear frame-rate normalization consistently outperforms alternatives across all metrics. The adaptive approach shows promise with competitive HOTA scores but introduces additional computational complexity. No normalization significantly degrades performance, confirming the importance of frame-rate-aware temporal modeling.

\textbf{Computational Scalability:} Training time increases super-linearly for $W > 10$ due to increased edge complexity in the graph structure. The $O(n^2t)$ complexity becomes evident as memory requirements grow from 5.9GB to 12.1GB when increasing window size from 3 to 15 frames.
\end{document}